\newcommand{\Tsh}{T_{\scriptscriptstyle \text{Sh}}} 
\newcommand{\Tc}{T_{\scriptscriptstyle \text{C}}}
\newcommand{\Tg}{T_{\scriptscriptstyle \text{G}}}
\newcommand{\Ts}{T_{\scriptscriptstyle \text{S}}}
\newcommand{\IKL}{I_{\scriptscriptstyle \text{KL}}}
\theoremstyle{plain} 
\newtheorem{theorem}{Theorem}[section]
\newtheorem{lemma}{Lemma}[section]
\theoremstyle{definition} 
\newtheorem{remark}{Remark}[section]
\numberwithin{equation}{section} 
\title{\bf Sequential Change-Point Detection for Mutually Exciting Point Processes}
\author{
Haoyun Wang$^\mathrm{a}$, Liyan Xie$^\mathrm{a}$, Yao Xie$^\mathrm{a}$, Alex Cuozzo$^\mathrm{b}$, Simon Mak$^\mathrm{b}$\\
        \small $^{a}$School of Industrial and Systems Engineering, Georgia Institute of Technology, Atlanta, Georgia, USA, \\ 
        \small $^{a}$Department of Statistical Science, Duke University, Durham, North Carolina, USA \\
}
\date{} 
\begin{document}

\maketitle

\bigskip
\begin{abstract}
We present a new CUSUM procedure for sequential change-point detection in self- and mutually-exciting point processes (specifically, Hawkes networks) using discrete events data. Hawkes networks have become a popular model in statistics and machine learning, primarily due to their capability in modeling irregularly observed data where the timing between events carries a lot of information. The problem of detecting abrupt changes in Hawkes networks arises from various applications, including neuroengineering, sensor networks, and social network monitoring. Despite this, there has not been an efficient online algorithm for detecting such changes from sequential data. To this end, we propose an online recursive implementation of the CUSUM statistic for Hawkes processes, which is computationally and memory-efficient and can be decentralized for distributed computing. We first prove theoretical properties of this new CUSUM procedure, then show the improved performance of this approach over existing methods, including the Shewhart procedure based on count data, the generalized likelihood ratio statistic, and the standard score statistic. This is demonstrated via simulation studies and an application to population code change-detection in neuroengineering.
\end{abstract}

\noindent%
{\it Keywords:}  Change-point detection; CUSUM; Hawkes processes; Online monitoring; Neuroengineering. 
\vfill

\newpage

\section{Introduction}

Point processes are widely used for modeling discrete events data, which consists of a series of event times and additional associated information. Recently, a class of mutually-exciting non-homogeneous point processes called Hawkes processes \citep{hawkes1971spectra} has gained much popularity in the statistics and machine learning literature. The intensity function of the Hawkes process consists of a deterministic part and a stochastic part, which captures the triggering or inhibiting effects of past events on future events. For example, each earthquake is usually followed by a sequence of aftershock activities and the occurrence rate of aftershocks can be represented in the stochastic part of the intensity function \citep{ogata1988statistical}.  
Hawkes processes provide a flexible model for capturing spatio-temporal correlations, and have been successfully applied in a wide range of domains including seismology \citep{ogata1988statistical,ogata1998space}, criminology \citep{mohler2011self}, epidemiology \citep{rizoiu2018sir}, social networks \citep{yang2013mixture}, finance \citep{hawkes2018hawkes}, and neural activity \citep{reynaud2013inference}. 

Detection of abrupt changes in the Hawkes process is a fundamental problem, which aims to detect the change as quickly as possible subject to false alarm constraints. For instance, in sensor network monitoring, we would like to detect any change as soon as possible using a stream of event data; such changes may represent a shift in system status or event anomalies. There are, however, key challenges for detecting changes in Hawkes processes; this includes the complex spatial and temporal dependence of the event data and long-term dependencies. To address such challenges, we need to develop computationally efficient online detection algorithms with performance guarantees. 

A motivating application for our work is the change-point detection of biological neural networks. This is a fundamental topic in neuroengineering \citep{eliasmith2003neural}, an emerging area at the intersection of physical and biological sciences. The goal is to detect neural states and state changes from experimental spike train data, which records the sequence of times when a neuron fires an action potential. Hawkes processes provide an appealing model for such data: its mutually exciting property naturally mimics neuron-to-neuron influence's electrochemical dynamics. The model's probabilistic nature can also capture noisy influences on the network, resulting from unobserved neurons or external stimuli. There has been much work on applying Hawkes processes for neuroscience problems, e.g., for inferring functional connectivity \citep{lambert2018reconstructing} and uncertainty quantification \citep{wang2020uncertainty}. Change-points over a biological network often arise from sparse population code changes \citep{tang2018large}. Figure~\ref{fig:codingillus} illustrates an example of this change. Here, each dot represents a neuron in the visual cortex. Colored dots show neurons that respond to seeing a cat or a dog, and shared dots represent common features between both animals (e.g., mammal, pet). The sequential detection of this change-point sheds light on the relationship between stimulus and response timing, providing a better understanding of each neuron's role in the population code, which can then be used for rehabilitation of neuronal networks.

Other applications include detecting the existence of hot topics over social networks which are of interest for the social media \citep{li2017detecting}. Change-point detection of the distribution of crime cases can help the police department to take quick reaction and reallocate patrols \citep{mohler2011self}. Detecting changes in the spread of COVID cases will enable people to recognize a threathening new-born variant \citep{chiang2021hawkes}.

\begin{figure}[htbp!]
\centering
\includegraphics[width = 0.6\textwidth]{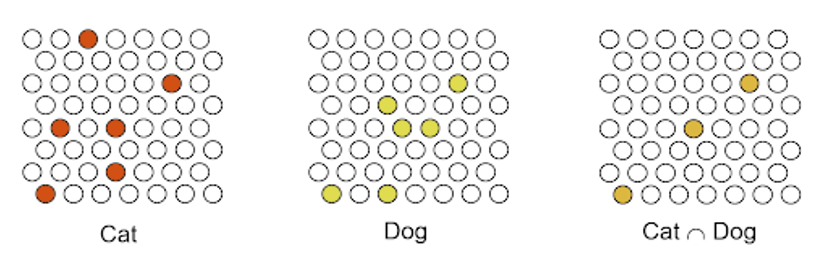}    
\caption{Visualizing sparse population coding for neuronal networks. Each dot represents a neuron, and colored dots show neurons which respond to seeing a cat or a dog.}
\label{fig:codingillus}
\end{figure}

While there has been much work on fitting Hawkes processes in the literature (see \cite{reinhart2018review} for a recent survey), change-point detection for Hawkes processes is left an important topic with only very little attention and much less studied. 
In \cite{wang2020detecting}, the offline change-point detection problem for high-dimensional Hawkes processes was studied, and the goal is to estimate (multiple) change-points. 
In \cite{rambaldi2018detection}, a model selection scheme was proposed to identify the presence of exogenous events that increase the intensity of the Hawkes process for a given time period. 
A cumulant-based multi-resolution segmentation algorithm was proposed in \cite{zhou2020fast} to find the optimal partition of the nonstationary Hawkes process into several non-overlapping segments. 
On the contrary, we focus on the \textit{sequential} detection problem, which aims to detect the change as quickly as possible. Online change-point detection for Hawkes processes was considered in \cite{li2017detecting}, where the generalized likelihood ratio (GLR) test was used to detect the change with unknown post-change parameters. In that work, the expectation-maximization (EM) algorithm was used to estimate unknown post-change parameters, which does not allow for an efficient recursive implementation and could be time-consuming. For the target problem of neuronal network detection, we would like to detect the change in real-time from streaming data, utilizing a more computationally efficient procedure.

In this paper, we present a novel CUSUM procedure for sequential change detection in Hawkes processes. The recursive CUSUM is based on a log-likelihood ratio statistic, which is further modified to improve computational efficiency with the practical consideration of removing historical data with long lags. The new CUSUM procedure is computationally and memory efficient as a recursive procedure, which is crucial for its online implementation in practice, such as sensor network and social network monitoring problems. We study the theoretical properties of this new CUSUM procedure, including an analysis of its average run length (ARL) and expected detection delay (EDD). We then compare the proposed CUSUM procedure with existing change detection algorithms based on the GLR statistic \citep{li2017detecting} and the score statistic in a comprehensive simulation study. Finally, we apply our method to the aforementioned motivating neuroengineering problem on population code change-detection for biological neural networks. Numerical results show that the proposed CUSUM procedure outperforms existing alternative methods.

The rest of the paper is organized as follows. Section~\ref{sec:prelim} introduces the basics for Hawkes processes. Section~\ref{sec:method} sets up the change-point detection problem, outlines the proposed CUSUM procedure, discusses algorithmic developments for computational and memory efficiency, and presents its theoretical properties. Section~\ref{sec:alternative} discusses some alternative detection methods. Sections~\ref{sec:numerical_comparison} and \ref{sec:realdata} compares the proposed CUSUM approach with existing methods for a simulation study and a real-world application using neural spike train data. Section~\ref{sec:conclusion} concludes the paper with some discussions.

\section{Preliminaries}\label{sec:prelim}

We first provide some background on point processes and Hawkes processes, which will be used in later sections.

A temporal point process is a random process whose realization consists of a sequence of discrete events occurring at times $\{t_i, i=1,2,\ldots\}$, with $t_i \in \mathbb{R}^{+}$. Let the history $\mathcal{H}_{t^-}$ be the sequence of times of events $\{t_1, t_2, \dots, t_n\}$ up to but {\it not} including time $t$.  
Let $N_t$ represents the number of events before time $t$, then $N_t$ is a counting process which can be defined as:
$
dN_t = \sum_{t_i\in \mathcal{H}_t}\delta(t-t_i)dt,
$
where $\delta$ is the Dirac function.
The sequence of discrete event times $\{t_i, i=1,2,\ldots\}$ can be regarded as when the counting process $N_t$ has jumped. 

A point process can be characterized by its conditional intensity function, denoted as $\lambda(t)$. This conditional intensity function is also known as the hazard function \citep{rasmussen2011temporal}, and is defined as
$
\lambda(t) = f^*(t)/(1-F^*(t)).
$
Here, $f^*(t)$ is the probability density function of the next event time conditional on the past, and $F^*(t) = \mathbb{P}\{t_{n+1}<t | \mathcal{H}_{t^-}\}$ is the associated conditional cumulative distribution function capturing the probability of the $(n+1)$-th event happening before time $t$. Thus if we consider a small time interval $[t,t+dt)$, we have
\begin{equation*}
\lambda(t) dt =\frac{f^*(t)dt}{1-F^*(t)} =  \frac{\mathbb{P}(t_{n+1}\in[t,t+dt))}{\mathbb{P}(t_{n+1}\geq t)} = \mathbb{P}\{t_{n+1}\in[t,t+dt)|\mathcal H_{t^-}\}.
\end{equation*}

\subsection{One-Dimensional Point Processes}

For one-dimensional Hawkes process, the intensity function takes the form \citep{hawkes1971spectra}: 
\begin{equation}
\lambda(t) = \mu(t) + \alpha  \int_{0}^{t} \varphi(t-\tau) dN_{\tau},
\end{equation}
where $\mu(t)$  is the base intensity, $\alpha$ is the influence parameter, and $\varphi(t)$ is a normalized kernel function satisfying $\int \varphi(t)dt=1$. A commonly used kernel function is the exponential kernel $\varphi(t) = \beta e^{-\beta t}$ with $\beta>0$. We assume $0\leq\alpha<1$ to ensure a stationary process.

Given event times $\{t_1,t_2,\ldots, t_n\}$ which happened before a given time $t<\infty$, the log-likelihood function for the Hawkes process can be written as follows (see \cite{daley2003introduction} for details):
\begin{align}\label{eq:likelihood}
\ell_t= \sum_{i =1 }^n \log \left[  \mu(t_i)+ \alpha\sum_{t_j < t_i} \varphi(t_i-t_j) \right]
- \int_0^t \mu(s) ds - \alpha \sum_{i=1}^n  \int_{t_i}^t \varphi(s-t_i) ds.
\end{align}
In case of the exponential kernel $\varphi(t) = \beta e^{-\beta t}$ and a constant base intensity $\mu$, \eqref{eq:likelihood} reads 
$$\ell_t = \sum_{i= 1}^n    \log \left[ \mu+ \alpha \sum_{t_j < t_i} \beta e^{-\beta (t_i-t_j)} \right]
-  \mu t -\alpha \sum_{i=1}^n  \left[ 1- e^{-\beta(t-t_i)}\right].$$
 As we will see in the following, this log-likelihood plays a key role in sequential change detection procedures.

\subsection{Network Point Processes}\label{sec:multivariate_Hawkes}

The multivariate Hawkes process on a network with $D$ nodes is represented by a series of event times together with their location $\{(t_i,u_i),i=1,2,\ldots\}$, where $t_i \in \mathbb{R}^{+}$ is the event time and $u_i\in [D]$ is the node on which the $i$-th event occurs. Here we use $[D]$ to represent the set $\{1,\cdots,D\}$. The intensity function for node $i$ at time $t$ is
$$
\lambda_i(t) = \mu_i(t) + \sum_{j\in[D]}\alpha_{ij}\int_0^t\varphi_{ij}(t-s)dN_s^j,
$$
where $\mu_i(t)$ is the base intensity at node $i$, $\alpha_{ij}$ is the influence parameter from node $j$ to node $i$, $\varphi_{ij}(t)$ is a normalized kernel function, and $N_t^j$ is a counting process on node $j$:
$
dN_t^j = \sum_{k:t_k<t,u_k = j}\delta(t-t_k)dt.
$
The log-likelihood function for the network setting up to time $t$ is given by:
\begin{equation}
  \ell_t(A) = -\sum_{i\in[D]}\int_0^t \lambda_i(s)ds + \sum_{i\in[D]}\int_0^t \log(\lambda_i(s))dN_s^i,
  \label{eq:log-likelihood}
\end{equation}
where $A = (\alpha_{ij})_{i,j\in[D]}\in\mathbb R^{D\times D}$ is the matrix representation for the influence parameters.

The log-likelihood expression in Equation \eqref{eq:log-likelihood} reveals a useful property which we later exploit for distributed change-point detection. Note that this log-likelihood can be decoupled as the summation over $D$ nodes, in that it consists of the sum of the log-likelihood at each node. Furthermore, the intensity function $\lambda_i(\cdot)$ only involves events observed on the neighbors of $i$, i.e., the nodes which influence node $i$. This property allows us to develop a \textit{distributed} change-point detection procedure, where each node can compute their likelihood in parallel, and only needs to communicate with neighboring nodes and not over the entire network (assuming such neighborhood information is known beforehand).

\section{Proposed CUSUM Detection Framework}\label{sec:method}
\subsection{Problem Set-up}\label{sec:setup}

The problem of change-point detection for Hawkes networks can be set-up as follows. Assume there exists a true change-point time $\kappa > 0$, and the event data follows one point process before the change-point and follows another point process afterward. We consider in this work two specific cases: (i) the null (pre-change) point process is a Poisson process, whereas the alternative (post-change) point process is a Hawkes point process; (ii) the null point process is a Hawkes point process, whereas the alternative point process is a different Hawkes point process, e.g., the influence parameter $A$ has been shifted. Note that the first scenario can be seen as a specific case of the second, since a Poisson process can be viewed as a specific Hawkes process with influence parameters set as 0.  

Consider now a hypothesis test for detecting temporal pattern shifts in the Hawkes process. Assuming the Hawkes process is stationary and the change-point $\kappa$ is an {\it unknown} variable, this test can be formulated as: 
\begin{equation}
\label{hypothesis test}
\begin{array}{ll}
 H_0: &\lambda_i^* (s)=\mu_i(s) + \sum_{j\in[D]}\alpha_{ij,0}  \int_0^{s} \varphi_{ij}(s-v)  dN_{v}^j; \quad i\in[D], s \geq 0,\\
 H_1: &  \lambda_i^* (s)=\mu_i(s) + \sum_{j\in[D]}\alpha_{ij,0}  \int_0^{s} \varphi_{ij}(s-v)  dN_{v}^j; \quad i\in[D], 0 \leq s \leq\kappa,\\
&  \lambda_i^* (s)=\mu_i(s) + \sum_{j\in[D]}\alpha_{ij,1}  \int_\kappa^{s} \varphi_{ij}(s-v)  dN_{v}^j; \quad i\in[D], s > \kappa.
\end{array}
\end{equation}
Here, $\lambda_i^*(s)$ denotes the true intensity for node $i$ at time $s$. The pre-change parameters $\{\alpha_{ij,0}\}_{i,j\in[D]}$ can typically be elicited from prior knowledge of the process or estimated from reference data. The post-change parameters $\{\alpha_{ij,1}\}_{i,j\in[D]}$ are known in some scenarios, but more often it corresponds to an unexpected anomaly and we may not have enough data to estimate this in advance. Alternatively, we can treat the post-change parameters as the targeted smallest change to be detected. A change detection procedure resolves the two hypotheses using a stopping time $T$, which is a function of the event sequence, as explained next.

\subsection{A Recursive CUSUM Statistic}\label{sec:cusum}

We now present the cumulative sum (CUSUM) statistics based on the log-likelihood ratio. The CUSUM procedure was first proposed in \cite{page-biometrica-1954}, assuming both pre- and post-change parameters are provided (or estimated). The CUSUM is known to be computationally efficient since it can be computed recursively. 
The CUSUM procedure is most commonly defined for i.i.d. observations, but there is much recent development in extending CUSUM for non i.i.d. observations \citep{tartakovsky2014sequential,reviewCP2021}.

We first define the log-likelihood ratio function which will be used as the main building block of our procedure. For a hypothesized change-point $\tau$, the log-likelihood ratio of the model \eqref{hypothesis test} up to time $t$ can be derived as:
\begin{align}
\ell_{t, \tau} = \sum_{i\in[D]}\int_{\tau}^t\log \left( \frac{\lambda_{i,\tau}(s)}{\lambda_{i,\infty}(s)} \right) dN_s^i  - \sum_{i\in[D]}\int_{\tau}^t (\lambda_{i,\tau}(s)-\lambda_{i,\infty}(s)) ds,
\label{eq:log-likelihood_ratio}
\end{align}
where $$
\lambda_{i,\tau}(t) = \begin{cases}
\mu_i(t) + \sum_{j\in[D]}\alpha_{ij,1}\int_\tau^t\varphi_{ij}(t-s)dN_s^j,& t> \tau,\\
\lambda_{i,\infty}(t), & 0\leq t\leq\tau,
\end{cases}
$$
is the intensity for node $i$ if the change-point happens at $\tau$, and 
$
\lambda_{i,\infty}(t) = \mu_i(t) + \sum_{j\in[D]}\alpha_{ij,0}\int_0^t\varphi_{ij}(t-s)dN_s^j 
$
is the intensity under the null hypothesis. Here, $\infty$ is used to indicate that the event that the change never happens.


Given assumed post-change parameters, $\{\alpha_{ij,1}\}_{i,j\in[D]}$, the stopping time for CUSUM is given by
\begin{align}\label{eq:cusum}
\Tc = \inf \{t:\, \sup_{\tau < t}  \ell_{t, \tau} >b \},
\end{align}
where $\ell_{t,\tau}$ is the log-likelihood ratio statistic defined in Equation \eqref{eq:log-likelihood_ratio}, and $b >0$ is a pre-specified threshold. 
The procedure stops when the log-likelihood ratio from some hypothesized change-point $\tau$ exceeds threshold $b$. 

In contrast to the original CUSUM procedure \citep{page-biometrica-1954} where the samples are taken in a discrete-time fashion, here the CUSUM statistic is continuous-time and has memory. In particular, due to the memory of the Hawkes process, the observations are {\it non-i.i.d.} and have complex temporal dependence. Because of this dependence, the simple recursive approach for standard CUSUM does not extend to the current (more complex) Hawkes process setting, and further developments are needed.

To derive an computationally efficient recursive algorithm for CUSUM in the network Hawkes process, we start with a lemma for the log-likelihood ratio $\ell_{t, \tau}$. This lemma shows that, although the supremum of the log-likelihood ratio statistic over the unknown change-point appears to be on a continuum, it will be obtained at the observed event times.
\begin{lemma}\label{lem:maximizer}
Given the event times $\{t_i,i=1,2,\ldots\}$, for any fixed $t$ and $k$, $t>t_k$, it follows that
$
\sup\limits_{t_{k}< \tau \leq \min\{t_{k+1},t\}} \ell_{t,\tau} =\lim_{\tau\to t_k^+}\ell_{t,\tau}=:\ell_{t,t_k^+},
$
and
$
\sup\limits_{0\leq \tau \leq t_1} \ell_{t,\tau} = \ell_{t,0}.
$
\end{lemma}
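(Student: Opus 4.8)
The plan is to show that, on each open inter-event interval $(t_k,t_{k+1})$, the map $\tau\mapsto\ell_{t,\tau}$ is non-increasing. Since the interval $(t_k,\min\{t_{k+1},t\}]$ excludes its left endpoint, monotonicity forces the supremum to equal the right-limit $\lim_{\tau\to t_k^+}\ell_{t,\tau}=\ell_{t,t_k^+}$; on the first interval the endpoint $\tau=0$ is admissible, so the supremum is attained there and equals $\ell_{t,0}$. Thus the entire statement reduces to a monotonicity claim on a single interval.

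The key observation is that on $(t_k,t_{k+1})$ there are no event times. Consequently, for every fixed $s>\tau$, shifting $\tau$ within this interval neither adds nor removes any point from the integration range $(\tau,s]$ appearing in $\int_\tau^s\varphi_{ij}(s-v)\,dN_v^j$; hence $\lambda_{i,\tau}(s)$ is in fact \emph{independent} of $\tau$ for $\tau\in(t_k,t_{k+1})$ and each fixed $s$. I will record this carefully, as it is the crux of the argument. It has two consequences for the two terms of $\ell_{t,\tau}$ in \eqref{eq:log-likelihood_ratio}. First, in the jump term $\sum_i\int_\tau^t\log(\lambda_{i,\tau}(s)/\lambda_{i,\infty}(s))\,dN_s^i$, both the set of event times $t_m$ lying in $(\tau,t]$ and the integrand values $\lambda_{i,\tau}(t_m)$ at those events are $\tau$-free, so this term is \emph{constant} in $\tau$ on the interval. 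Second, in the compensator term $\sum_i\int_\tau^t(\lambda_{i,\tau}(s)-\lambda_{i,\infty}(s))\,ds$ the integrand is $\tau$-free, so the only remaining dependence on $\tau$ is through the lower limit of integration.

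With the integrand frozen, differentiating the compensator term reduces to the Leibniz boundary contribution $-\sum_i\bigl(\lambda_{i,\tau}(\tau^+)-\lambda_{i,\infty}(\tau)\bigr)$. Evaluating the post-change intensity at the left edge gives $\lambda_{i,\tau}(\tau^+)=\mu_i(\tau)$ (the integral $\int_\tau^{\tau^+}$ vanishes), so that $\lambda_{i,\infty}(\tau)-\lambda_{i,\tau}(\tau^+)=\sum_{j\in[D]}\alpha_{ij,0}\int_0^\tau\varphi_{ij}(\tau-v)\,dN_v^j\ge 0$ under the standing assumptions $\alpha_{ij,0}\ge 0$ and $\varphi_{ij}\ge 0$. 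Combining the constant jump term with this computation yields $\tfrac{d}{d\tau}\ell_{t,\tau}=-\sum_{i\in[D]}\sum_{j\in[D]}\alpha_{ij,0}\int_0^\tau\varphi_{ij}(\tau-v)\,dN_v^j\le 0$, i.e. $\ell_{t,\tau}$ is non-increasing on $(t_k,t_{k+1})$, which is exactly what is needed.

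I expect the main obstacle to be the rigorous bookkeeping around event times rather than any hard estimate. One must (i) verify the $\tau$-independence of $\lambda_{i,\tau}(s)$ and of the jump-term index set precisely for $\tau$ strictly inside $(t_k,t_{k+1})$, being careful that the count $\int_\tau^s dN_v^j$ jumps only when $\tau$ crosses an actual event; (ii) justify the left-edge evaluation $\lambda_{i,\tau}(\tau^+)=\mu_i(\tau)$ and the differentiation under the integral once the integrand is seen to be $\tau$-free; and (iii) pass from open-interval monotonicity to the stated half-open supremum, checking that any downward jump of the jump term at the right endpoint $\tau=t_{k+1}$ only lowers $\ell_{t,\tau}$ and so does not disturb the conclusion that the supremum is the right-limit at $t_k$.
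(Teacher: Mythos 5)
Your proposal is correct and follows essentially the same route as the paper: both arguments fix the jump term as constant in $\tau$ on each inter-event interval and exploit $\lambda_{i,\tau}(s)=\mu_i(s)\le\lambda_{i,\infty}(s)$ there, so that moving $\tau$ toward $t_k^+$ can only increase the compensator term --- you phrase this as a Leibniz derivative in the lower limit, the paper as the sign of the integrand, which is the same computation. (There is a harmless sign slip in your intermediate boundary contribution, which should read $+\sum_i\bigl(\lambda_{i,\tau}(\tau^+)-\lambda_{i,\infty}(\tau)\bigr)$, but your final formula for $\tfrac{d}{d\tau}\ell_{t,\tau}$ and the conclusion are right.)
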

\noindent The proof of this lemma is provided in the supplementary files. Lemma~\ref{lem:maximizer} says that we only need to consider the values of the log-likelihood evaluated as the past event times, rather than a continuum of possible values for $\tau$. As we show later, this will greatly simplify the computation of the log-likelihood ratio statistic. 

For computational efficiency, we can further simplify the calculation in \eqref{eq:cusum} (which involves $\sup_{\tau<t}\ell_{t,\tau}$) by considering $t$ on a discretized grid with a pre-specified grid size $\gamma>0$. In this case, we would only need to calculate the detection statistic $\sup_{\tau<n\gamma}\ell_{n\gamma,\tau}$ for $n\in\mathbb Z$.

%
Finally, with the discretization for both $\tau$ and $t$, the log-likelihood ratio $\ell_{n\gamma,t_k^+}$ and $\ell_{(n+1)\gamma,t_k^+}$ have the following relationship, given $n\gamma\geq t_k$,
\begin{equation}
\ell_{(n+1)\gamma,t_k^+} = \ell_{n\gamma,t_k^+} + \sum_{i\in [D]}\int_{n\gamma}^{(n+1)\gamma}\log\left(\frac{\lambda_{i,t_k^+}(s)}{\lambda_{i,\infty}(s)}\right)dN_s^i - \sum_{i\in[D]}\int_{n\gamma}^{(n+1)\gamma}(\lambda_{i,t_k^+}(s)-\lambda_{i,\infty}(s))ds.
\label{eq:update_ell}
\end{equation}
Equation \eqref{eq:update_ell} provides a recursive procedure for computing the log-likelihood ratios $\ell_{n\gamma,t_k^+}$, as long as the one-dimensional integrals can be evaluated or approximated numerically. If we have additional access to the cumulative kernels
$
\Phi_{ij}(t) = \int_0^t\varphi_{ij}(s)\mathbbm{1}(s\geq 0)ds,
$
where $\mathbbm 1(\cdot)$ is the indicator function,
this recursion can be computed without numerical integration as follows:
\begin{align}
\ell_{(n+1)\gamma,t_k^+} = \ell_{n\gamma,t_k^+} +&\ \sum_{i\in [D]}\int_{n\gamma}^{(n+1)\gamma}\log\left(\frac{\lambda_{i,t_k^+}(s)}{\lambda_{i,\infty}(s)}\right)dN_s^i \nonumber\\
+&\ \sum_{i,j\in[D]}\int_{0}^{(n+1)\gamma}\alpha_{ij,0}(\Phi_{ij}((n+1)\gamma-s)-\Phi_{ij}(n\gamma-s))dN_s^j \nonumber\\
-&\ \sum_{i,j\in[D]}\int_{t_k^{+}}^{(n+1)\gamma}\alpha_{ij,1}(\Phi_{ij}((n+1)\gamma-s)-\Phi_{ij}(n\gamma-s))dN_s^j.
\label{eq:update_ell_no_integral}
\end{align}

\begin{algorithm}[htbp!]
\SetAlgoLined
\textbf{Input:} event times 
$\{(t_i,u_i),i=1,2,\ldots\}$; pre-change parameters $\{\alpha_{ij,0}\}_{i,j\in[D]}$; post-change parameters $\{\alpha_{ij,1}\}_{i,j\in[D]}$; threshold $b$; grid size $\gamma$\;
Initialization: $n\leftarrow 0$,\,$S_0\leftarrow 0$,\,$\ell_{0,0} \leftarrow 0$\;
\While{$S_{n\gamma} < b$}{
$n \leftarrow n+1$\;
\For{$\tau \in\{t_k^+ : t_k<n\gamma\}\cup \{0\}$}{
\eIf{$\tau < (n-1)\gamma$}{
calculate $\ell_{n\gamma,\tau}$ using \eqref{eq:update_ell} or \eqref{eq:update_ell_no_integral}\;}{ 
calculate $\ell_{n\gamma,\tau}$ using \eqref{eq:log-likelihood_ratio}\;
}
}
$S_{n\gamma} \leftarrow \max_\tau\ell_{n\gamma,\tau}$\;
\If{$S_{n\gamma} > b$}{
\textbf{Output}: stopping time $T_{\scriptscriptstyle \text{\mdseries C}} \leftarrow n\gamma,$\, $\hat\tau \leftarrow \arg\max_\tau\ell_{n\gamma,\tau}$\;}
}
\caption{CUSUM for network Hawkes processes}\label{alg:CUSUM}
\end{algorithm}

Algorithm~\ref{alg:CUSUM} summarizes the key steps in the proposed CUSUM procedure. We provide a further remark on the choice of the grid size $\gamma >0$. Note that different choices of $\gamma$ corresponds to different updating frequencies for the CUSUM statistics, hence $\gamma$ is an important parameter for the algorithm. There is a performance trade-off in choosing the parameter $\gamma$: a very large choice of $\gamma$ may result in a large detection delay, whereas a very small $\gamma$ may leads to unnecessary computational complexity. The effect of $\gamma$ on the algorithm is investigated further in numerical studies, and it would be interesting to develop a method for choosing $\gamma$ adaptively.

\subsection{Modification for Memory Efficiency} 

We now present a modification of Algorithm \ref{alg:CUSUM} to improve memory efficiency of the procedure. Note that the ``exact'' CUSUM algorithm (Algorithm \ref{alg:CUSUM}) requires keeping track of the entire history of events, since all past events influence the intensity function. However, in practice, events which happened long ago will have little bearing on the current intensity, since its mutually-exciting property diminishes over time. One way to improve memory efficiency is to simply remove such events, since their influence on the present and the future (and thereby the performance of the method) would be small.

Consider the following {\it truncated kernel} with a width $B>0$:
\[
\widetilde \varphi_{ij}(t)=\begin{cases}
\varphi_{ij}(t),& t\leq B,\\
0,&t>B.
\end{cases}
\] 
Under the truncated kernel, an event has no influence over the whole process after $B$ into the future, and we only need to keep events during $[t-B,t]$ in our memory for computation. 
With the truncated kernel $\widetilde \varphi_{ij}$, the intensity for node $i$ can then be approximated by
\begin{equation}
\widetilde\lambda_{i,\tau}(t) = \begin{cases}
\mu_i(t) +\sum_{j\in[D]}\alpha_{ij,1} \int_{t-B}^t \varphi_{ij}(t-s)dN_s^j,& \tau<t-B,\\
\mu_i(t) +\sum_{j\in[D]}\alpha_{ij,1} \int_{\tau}^t \varphi_{ij}(t-s)dN_s^j,& t-B\leq \tau\leq t,\\
\mu_i(t) +\sum_{j\in[D]}\alpha_{ij,0} \int_{t-B}^t \varphi_{ij}(t-s)dN_s^j,& \tau> t.
\end{cases}
\label{eq:lambda_truncated}
\end{equation}
Here for all $\tau\geq 0$, $\widetilde\lambda_{i,\tau}(t)$ only depends on event data during $[t-B,t]$. Moreover, the intensity for $\tau<t-B$ does not depend on $\tau$, which enables us to update the log-likelihood ratio recursively for small $\tau$. If we also have access to the cumulative kernels $\Phi_{ij}, i,j\in [D],$ the recursion step in Equation \eqref{eq:update_ell_no_integral} can be approximated by
\begin{align}
\ell_{(n+1)\gamma,t_k^+} = \ell_{n\gamma,t_k^+} +&\ \sum_{i\in [D]}\int_{n\gamma}^{(n+1)\gamma}\log\left(\frac{\widetilde\lambda_{i,t_k^+}(s)}{\widetilde\lambda_{i,\infty}(s)}\right)dN_s^i \nonumber\\
+&\ \sum_{i,j\in[D]}\int_{n\gamma-B}^{(n+1)\gamma}\alpha_{ij,0}(\widetilde\Phi_{ij}((n+1)\gamma-s)-\widetilde\Phi_{ij}(n\gamma-s))dN_s^j \nonumber\\
-&\ \sum_{i,j\in[D]}\int_{\max\{t_k^+,n\gamma -B\}}^{(n+1)\gamma}\alpha_{ij,1}(\widetilde\Phi_{ij}((n+1)\gamma-s)-\widetilde\Phi_{ij}(n\gamma-s))dN_s^j,
\label{eq:update_ell_truncated_no_integral}
\end{align}
where $\widetilde\Phi_{ij}(t) = \Phi_{ij}(\min\{t,B\})$, and the summation is taken only for event times during $[n\gamma -B,(n+1)\gamma]$. 

Figure \ref{fig:algo_plot} shows an illustration of the memory-efficient CUSUM procedure. The details are summarized in Algorithm~\ref{alg:CUSUM2}. 

The computing and memory resources required for this procedure depend on both the network size and the number of events observed while monitoring. Each time we update the CUSUM statistics from $t = (n-1)\gamma$ to $n\gamma$, we track the log-likelihood ratios $\ell_{(n-1)\gamma,\cdot}$ for potential change-points $\tau$ that are event times from $(n-2)\gamma - B$ to $n\gamma$, along with a summary of the log-likelihood ratios for $\tau<(n-2)\gamma - B$. For each $\ell_{\cdot,\tau}$, when updating from $t = (n-1)\gamma$ to $t = n\gamma$ by \eqref{eq:update_ell_truncated_no_integral}, each $\widetilde\lambda_{\cdot,\cdot}(\cdot)$ is calculated using at most $N_{n\gamma} - N_{(n-1)\gamma -B}$ events, and the integral over counting measure is the summation over at most $(N_{n\gamma} - N_{(n-1)\gamma -B})D$ terms. Overall, the computation complexity for one update is $O((N_{n\gamma} - N_{(n-2)\gamma-B})^3D)$. Under the stability condition $\|A\|<1$ (where $\|\cdot\|$ is the spectral norm), a multi-dimensional Hawkes process can be shown to have a finite third-order moment and is ergodic \citep{achab2017uncovering}. In this case, the computation complexity of the memory-efficient CUSUM is linear in the time $t$.

Note that the dependency of the computation complexity on network size $D$ can be eliminated if, for each $j\in[D]$, $(\widetilde\Phi_{ij})_{i\in[D]}$ are identical. When adding the term on the second and third row in \eqref{eq:update_ell_truncated_no_integral}, the summation over $i$ can be pre-computed by saving $\sum_{i}\alpha_{ij,0}$, $\sum_{i}\alpha_{ij,1}$ for each $j\in[D]$. Thus it only takes 
$O((N_{n\gamma} - N_{(n-2)\gamma-B})^3)$ steps to perform the $n$-th update.

Regarding memory usage of the procedure, note that for the updates up until time $(n-1)\gamma$, we only need to keep track of event data with an occurrence time after $(n-1)\gamma-B$. Hence, the memory usage for the $n$-th update (apart from loading the network parameters) is $O(N_{n\gamma} - N_{(n-1)\gamma-B})$, the average is a constant with respect to time $t$.

\begin{figure}[htbp!]
    \centering
    \includegraphics[width = 0.8\textwidth]{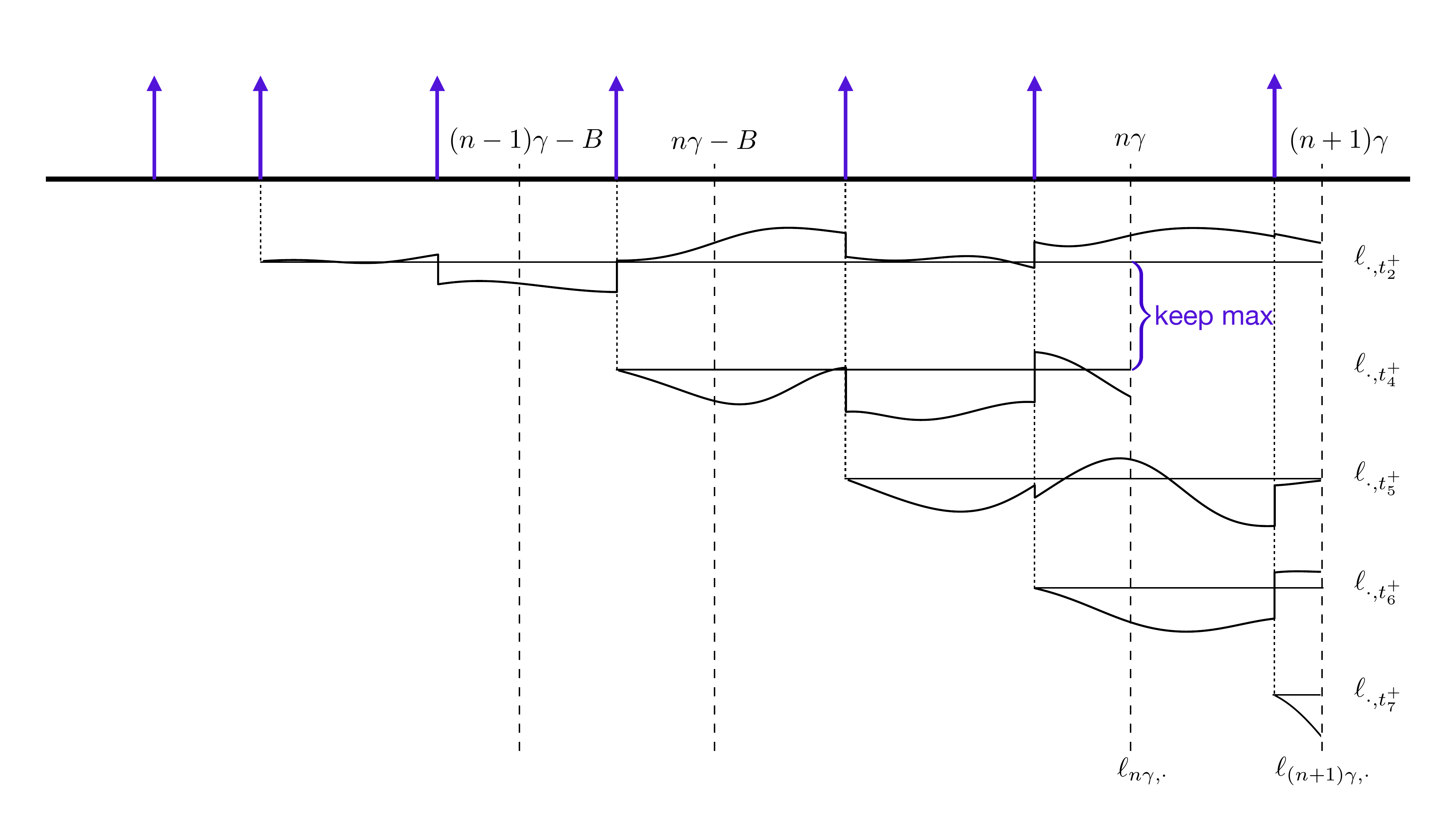}
    \caption{Illustration of the memory-efficient CUSUM algorithm. At time $t = n\gamma$, only one of the log-likelihood ratios $\ell_{n\gamma,\tau}$ for potential change-points $\tau<(n-1)\gamma - B$ is kept in memory (in this example, the one immediately after the second event). When updating the statistics to $t=(n+1)\gamma$, all log-likelihood ratios $\ell_{n\gamma,\tau}$ for $\tau\leq n\gamma - B$ are compared and again only the largest one is kept for future updates. }
    \label{fig:algo_plot}
\end{figure}

{\scriptsize
\begin{algorithm}[t!]
\SetAlgoLined
\textbf{Input:} event times 
$\{(t_i,u_i),i=1,2,\ldots\}$; pre-change parameters $\{\alpha_{ij,0}\}_{i,j\in[D]}$; post-change parameters $\{\alpha_{ij,1}\}_{i,j\in[D]}$; threshold $b$; grid size $\gamma$\;
Initialization: $n\leftarrow 0$,\,$S_0\leftarrow 0$,\,$\ell_{0,0} \leftarrow 0$,\,$\hat\tau \leftarrow 0$\;
\While{$S_{n\gamma} < b$}{
$n\leftarrow n+1$\;
\For{$\tau\in\{t_k^+ : (n-1)\gamma\leq t_k<n\gamma$\}}{
calculate $\ell_{n\gamma,\tau}$ using \eqref{eq:log-likelihood_ratio} with \eqref{eq:lambda_truncated}\;
}
\For{$\tau\in\{t_k^+:(n-1)\gamma-B\leq t_k<(n-1)\gamma$}{
calculate $\ell_{n\gamma,\tau}$ using \eqref{eq:update_ell} with \eqref{eq:lambda_truncated} or \eqref{eq:update_ell_truncated_no_integral}\;
}
\For{$\tau\in\{t_k^+: (n-2)\gamma -B\leq t_k<(n-1)\gamma-B\}$}{
\If{$\ell_{(n-1)\gamma,\tau}>\ell_{(n-1)\gamma,\hat\tau}$}{
$\hat\tau\leftarrow \tau$\;
}
}
update $\ell_{n\gamma,\hat\tau}$ from $\ell_{(n-1)\gamma,\hat\tau}$ using \eqref{eq:update_ell} with \eqref{eq:lambda_truncated} or \eqref{eq:update_ell_truncated_no_integral}\;

$S_{n\gamma} \leftarrow \max_\tau \ell_{n\gamma,\tau}$\;
\If{$S_{n\gamma} > b$}{
\textbf{Output}: stopping time $T_{\scriptscriptstyle \text{\mdseries C}} \leftarrow n\gamma$,\,$\hat\tau\leftarrow \arg\max_\tau\ell_{n\gamma,\tau}$\;
}
}
\caption{Memory-efficient CUSUM for network Hawkes processes}\label{alg:CUSUM2}
\end{algorithm}}

We illustrate the effect of the truncation width $B$ using a numerical example. The model is described in Section~\ref{sec:numerical_comparison}, where the kernel functions are all exponential 
$\varphi_{ij}(t) = \beta e^{-\beta t},t\geq 0,~ \forall i,j\in[D]$
with $\beta = 1$. Figure~\ref{fig:effect_of_B} shows the comparison between CUSUM statistics with and without kernel truncation. Both CUSUM statistics with the truncated kernel have the same trend as the exact CUSUM statistic. When $B = 1/\beta$, $63.2\%$ of the cumulative influence (which corresponds to the integral of the truncated kernel since the complete influence kernel integrates to one) 
is preserved, and the truncated statistic deviates from the exact CUSUM, which may result in a false alarm. When $B = 2/\beta$, $86.5\%$ of the cumulative influence is preserved, and there appears to be little difference between the truncated and exact CUSUM statistics. 

{\color{black} \textbf{Performance.} We discuss the performance of the proposed procedure via two widely used metrics for sequential change-point detection: (i) Average Run Length (ARL), defined as the expected value of the stopping time when there is no change, i.e., $\mathbb E_\infty[T]$, where $\mathbb P_\infty$ is the probability measure on the sequence of event times when the change never occurs, and $\mathbb E_\infty$ is its corresponding expectation; (ii) Expected Detection Delay (EDD), defined as the expected delay between the stopping time and the true change-point. Two common definitions for EDD can be found in \cite{Lorden1971} and \cite{pollak1985optimal}, both of which consider the worst-case delay over all possible change-point values. In particular, if the true change-point is $\kappa$, then the EDD can be defined as $\mathbb E_\kappa[T-\kappa|T>\kappa]$, where $\mathbb P_\kappa$ denotes the probability measure on the observations when the change occurs at time $\kappa$, and $\mathbb E_\kappa$ denotes the corresponding expectation. The theoretical properties of the ARL and EDD and be found in the supplementary files in Section B.}

\begin{figure}[htbp!]
    \centering
    \includegraphics[width = 0.9 \textwidth]{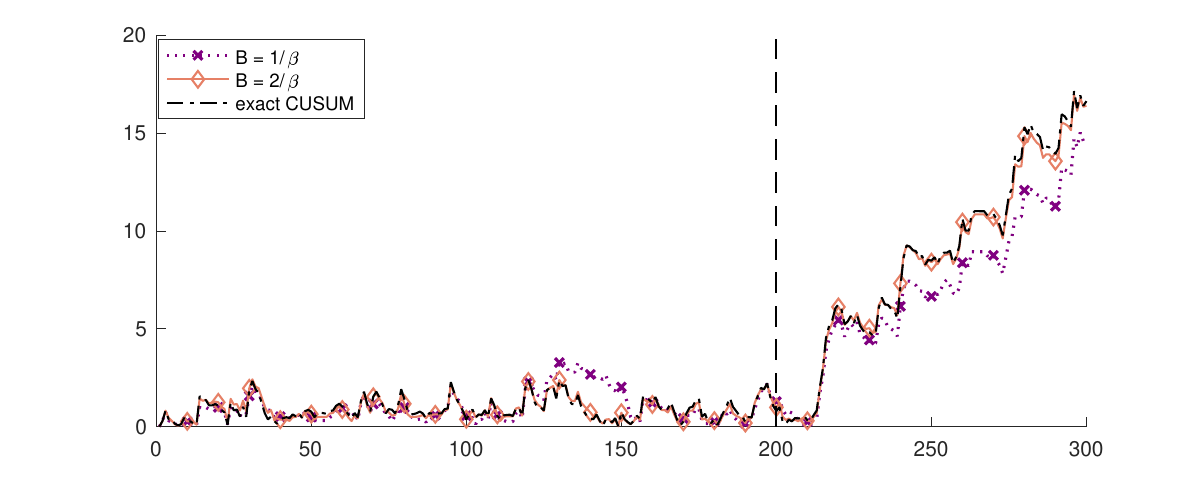}
    \caption{Comparison between CUSUM statistics with different truncation level $B$. The change-point occurs at time 200.}
    \label{fig:effect_of_B}
\end{figure}

\section{Alternative Detection Procedures}\label{sec:alternative}

In practical problems, the post-change parameters are not always known due to the lack of anomalous data (e.g., there could be various types of anomalies, and one may not know which anomaly to expect). This section discusses two alternate approaches to change-point detection on the Hawkes process: the score statistics and the GLR statistics. Neither method requires any knowledge of the post-change parameters.

\subsection{Score Statistics}

We consider the score statistics for constructing a detection procedure. The score statistic can detect any deviations from the null hypothesis \citep{xie2012spectrum}. It is particularly suitable for detecting small deviations (i.e., locally most efficient) and does not require estimating post-change parameters. The score function is defined as the derivative of the log-likelihood as in \eqref{eq:log-likelihood} over the parameters $\alpha_{ij,0}$, $i,j\in[D]$, on which we would like to detect the change. With $\boldsymbol \alpha_i = (\alpha_{ij})_{j\in[D]}\in\mathbb R^{D\times1}$, the score function on each node $i$ is
\begin{align}
\frac{\partial \ell_t}{\partial\boldsymbol\alpha_i} = -\int_0^t \frac{\partial\lambda_{i,\infty}(s)}{\partial \boldsymbol\alpha_i}ds + \int_0^t \lambda_{i,\infty}^{-1}(s)\frac{\partial\lambda_{i,\infty}(s)}{\partial \boldsymbol\alpha_{i}} dN_s^i,
\end{align}
since $\lambda_{i,\infty}$ only depends on the parameter $\boldsymbol\alpha_{i}$. Then the full score function becomes
$
\frac{\partial \ell_t}{\partial\text{vec}(A)} = \left(\frac{\partial\ell_t}{\partial \boldsymbol\alpha_1^T},\cdots,\frac{\partial\ell_t}{\partial \boldsymbol\alpha_D^T}\right)^T
$
where $A = (\alpha_{ij})_{ij\in[D]}$ and $\text{vec}(A) = (\boldsymbol\alpha_1^T,\cdots,\boldsymbol\alpha_D^T)^T$. In Theorem 3.4 of \cite{ogata1978asymptotic}, it is shown that the limiting distribution of the score function at the true parameter $A_0 = (\alpha_{ij,0})_{i,j\in[D]}$ is normally distributed, i.e.
$\frac{1}{\sqrt{t}}\frac{\partial\ell_t}{\partial \text{vec}(A)}\bigg|_{A = A_0} \to \mathcal N(0,I_0),
$
where $$
I_0 = \lim_{t\to\infty}\frac{1}{t}\mathbb E\left[\frac{\partial\ell_t}{\partial \text{vec}(A)}\frac{\partial\ell_t}{\partial \text{vec}^T(A)}\right]\bigg|_{A = A_0} = -\lim_{t\to\infty}\frac{1}{t}\mathbb E\left[\frac{\partial^2\ell_t}{\partial\text{vec}(A)\partial\text{vec}^T(A)}\right]\bigg|_{A = A_0}
$$ is the Fisher information matrix. Note that the Fisher information here is a diagonal block matrix, since for any $i,j,i',j'\in[D]$, $i\neq i'$, $\lambda_{i,\infty}$ is a constant with respect to $\alpha_{i',j'}$, and $\frac{\partial^2\ell_t}{\partial\alpha_{i,j}\partial\alpha_{i',j'}}=0.$ Each block of $I_0$ corresponds to $\boldsymbol\alpha_{i},i\in[D]$, the influence from all nodes to node $i$. The limiting distribution of the score function at the true parameter can then be shown to be
$
\frac{1}{t}\frac{\partial\ell_t}{\partial \text{vec}^T(A)}I_0^{-1}\frac{\partial\ell_t}{\partial \text{vec}(A)}\bigg|_{A = A_0} \to \chi^2_{D^2},
$
where $\chi^2_\nu$ is the chi-squared distribution with degrees-of-freedom $\nu$.

For change-point detection, we adopt the conventional sliding window approach, i.e., calculating the score statistic inside the sliding window $[t-w,t)$ for a suitably chosen window length $w$, and raise an alarm whenever the statistic exceeds the threshold $b$. The corresponding stopping time can be written as:
\[
\Ts = \inf\left\{t: \frac{1}{w}\frac{\partial\mathcal (\ell_t - \ell_{t-w})}{\partial\text{vec}^T(A)}I_0^{-1}\frac{\partial\mathcal (\ell_t - \ell_{t-w})}{\partial \text{vec}(A)}\bigg|_{A = A_0}  \geq b \right\}.
\]
With a sufficiently large window length, the score statistic under the null hypothesis should be around $D^2$, the expected value of the chi-squared random variable $\chi^2_{D^2}$. After the change-point, the expected score function at the pre-change parameters is no longer 0. We would expect the score statistic to be noticeably larger than $D^2$, and thus the change is detected. 

Like CUSUM statistics, a memory-efficiency problem arises for the score statistics, since the intensity $\lambda_{i,\infty}$ depends on the whole history of events. We can again replace $\lambda_{i,\infty}$ with $\widetilde\lambda_{i,\infty}$ using the aforementioned truncated kernels (a similar approach can also be used for the GLR statistics, discussed next). For practical reasons, we would also need to choose a grid size $\gamma$ and compute the score statistics only on the resulting grid. We discuss the relation between grid size and EDD for a fixed ARL later in Section~\ref{sec:numerical_comparison}.

\subsection{GLR Statistics}

When the post-change parameters are unknown, another way to perform change-point detection is via the generalized likelihood ratio (GLR) statistic. The idea is to find the parameters which best fit the data, then compare the likelihood ratio between the fitted parameters and the pre-change ones. This GLR statistics approach for multi-dimensional Hawkes processes was discussed in \cite{li2017detecting}. Using a sliding window of fixed length $w$, the log-likelihood ratio can be defined within each window as
$$
\ell_{t,t-w,\hat A} = \sum_{i\in[D]}\int_{t-w}^{t} \log\left(\frac{\hat\lambda_{i,t-w}(s)}{\lambda_{i,\infty}(s)}\right)dN_s^i - \int_{t-w}^t (\hat\lambda_{i,t-w}(s)-\lambda_{i,\infty}(s))ds,
$$
where $\hat\lambda_{i,t-w}$ is the intensity assuming $t-w$ is the change-point and $\hat A$ is the post-change parameter estimates. 
$
\hat\lambda_{i,t-w}(s) = \mu_i(s) +  \sum_{j\in[D]}\hat\alpha_{ij}\int_{t-w}^s \varphi_{ij}(s-v)dN_v^j,
$
and $\hat A = (\hat\alpha_{ij})_{i,j\in[D]}$ is the parameter that maximizes the likelihood in the current window $[t-w,t]$:
$$
\hat A = \arg\max_{A}\sum_{i\in[D]}\int_{t-w}^{t} \log\lambda_{i,t-w}(s)dN_s^i - \int_{t-w}^t \lambda_{i,t-w}(s)ds.
$$
A change is detected when the log-likelihood ratio exceeds certain threshold $b$:
$$
\Tg = \inf\{t:\ell_{t,t-w,\hat A}\geq b\}.
$$
For each window, a convex optimization problem is solved to find the maximum likelihood estimate $\hat A$ that best fits the data. However, this operation makes the GLR statistic computationally more expensive than CUSUM and the score statistic. To address this issue, we can use $\hat A$ from the previous window as an initialization for the gradient descent algorithm to find the MLE in the next step -- this ``warm-start'' may lead to faster convergence for finding the MLE. 

Compared with the score statistic, the GLR statistic is computationally more expensive, and it does not necessarily have better performance (which can be partly due to the optimization error in computing MLE), as shown in our example in Section~\ref{sec:numerical_comparison}. However, the GLR statistic is numerically more stable than the score statistic, especially for large networks. The reason is that we usually may not estimate the Fisher information with high accuracy. Even provided with the exact pre-change parameter $A_0$, there is no close-form solution for the Fisher information, and it can only be estimated by simulation or real data. The score statistic also involves inverting the Fisher information, which can suffer from a high condition number and numerical instability.

\section{Numerical Experiments}
\label{sec:numerical_comparison}

In this section, we compare several change-point detection procedures using simulated examples on a small network with 8 nodes and a larger one with 100 nodes. The small network is shown in Figure~\ref{fig:model}. The base intensity is proportional to the size of the node ranging from 0.5 to 1, and the edges indicate the asymmetrical influences between nodes. The edges in black are the pre-change parameters, while the edges in orange show a topological change between nodes 1,2, and 3. There are two emerging edges after the change-point, while all other edges remain the same. For the 100-node network, the background intensity is set to be 0.05 for all nodes. All nodes work independently in the pre-change scenario, while the post-change network consists of 200 directed edges with weight 0.2 chosen at random. 

Throughout this section, we use the exponential decaying kernel $\varphi_{ij}(t) = e^{-t},\forall i,j$ to generate event data. The kernel functions are truncated at $B = 5$ to leverage the computational and memory efficient procedures in Section \ref{sec:cusum}. The update rate is set at $\gamma = 5$.

Figure~\ref{fig:stats} visualizes the CUSUM, GLR, and score statistics on the same sequence of events for both networks. As expected, CUSUM grows steadily larger after the change-point for both network settings. The differenced CUSUM, GLR and score statistics show similar post-change fluctuations, as they can be understood as various measurements of how the process within the sliding window differs from the pre-change scenario. The proposed CUSUM procedure appears to be the least noisy before the change-point, which may be another explanation of why CUSUM has the best performance (see later), apart from its cumulative nature.

\begin{figure}[htbp!]
\centering
\includegraphics[width=0.35\textwidth]{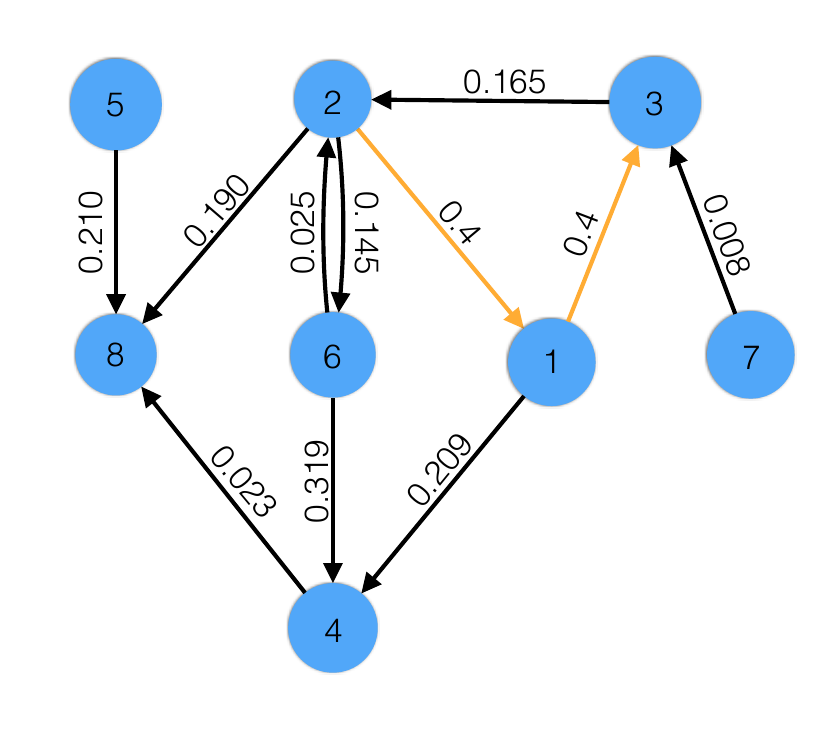}
\caption{Visualizing the simulated network Hawkes model. Here, the background intensity is proportional to the size of each node, with edges indicating directed influences between nodes. Orange edges show the topological change after the change-point.}
\label{fig:model}
\end{figure}

\begin{figure}[htbp!]
\centering
\begin{tabular}{cccc}
\includegraphics[width=0.22\textwidth]{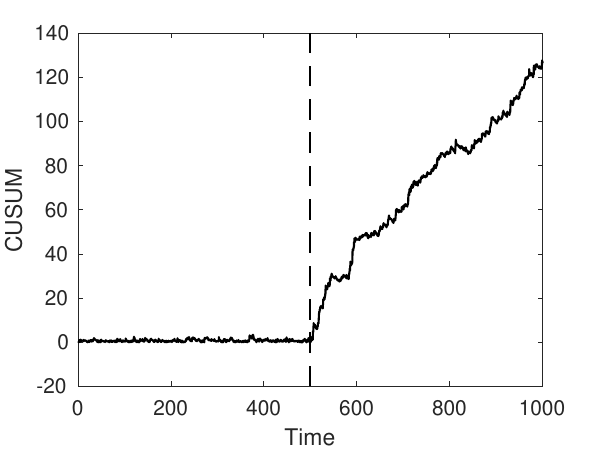} & \includegraphics[width=0.22\textwidth]{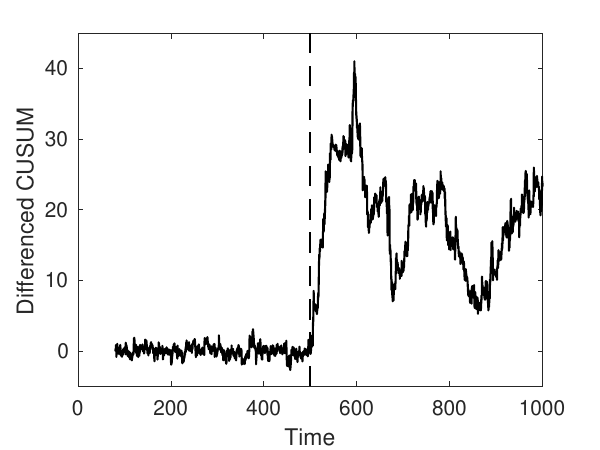} & \includegraphics[width=0.22\textwidth]{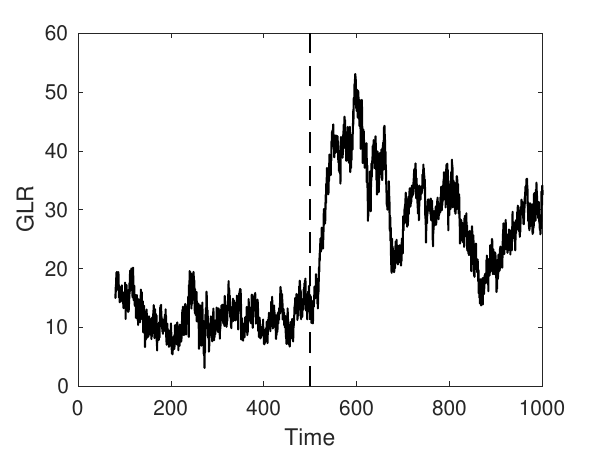}&\includegraphics[width=0.22\textwidth]{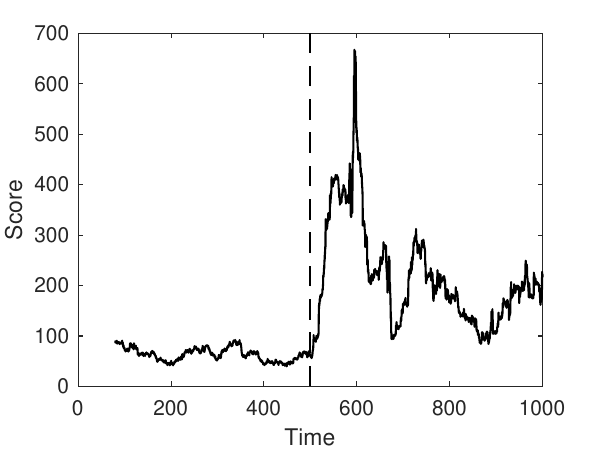}\\
\includegraphics[width=0.22\textwidth]{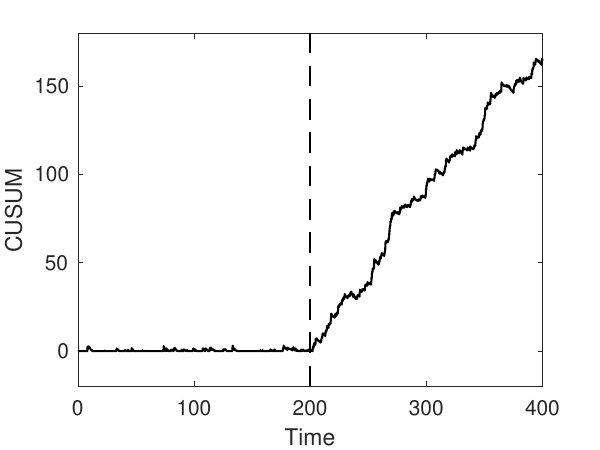} &\includegraphics[width=0.22\textwidth]{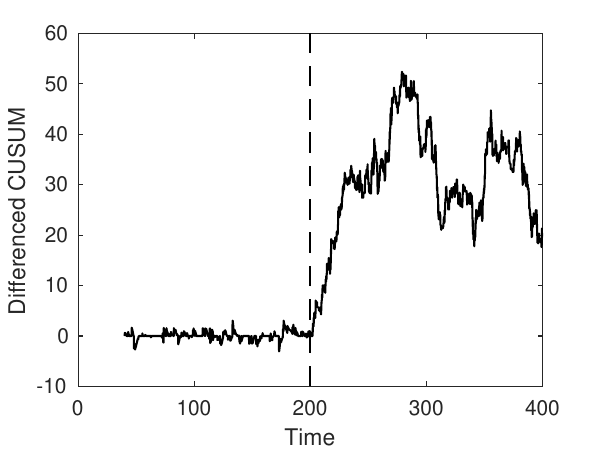} & \includegraphics[width=0.22\textwidth]{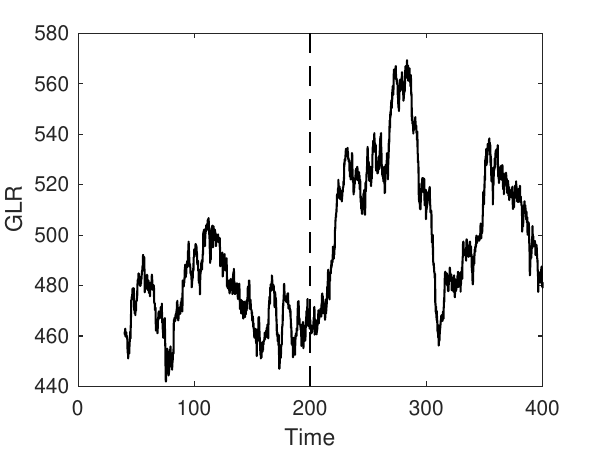}&\includegraphics[width=0.22\textwidth]{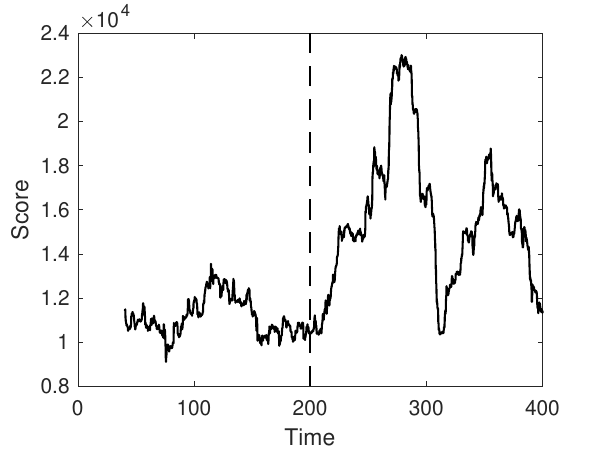}\\
(a)&(b)&(c)&(d)
\end{tabular}
\caption{Examples of detection statistics. The first row is generated on the small network of size 8, with window length $w = 80$ for both GLR and score statistics. The second row is generated on the large network of size 100, with window length $w = 40$. Both rows use the same sequence of events, and the window length is chosen numerically to optimize performance (see discussions in \ref{subsec:performance_comparison}). The vertical dashed line is the time of the true change-point $\kappa$. Column (b) shows the difference of CUSUM between time $t$ and $t-w$.}
\label{fig:stats}
\end{figure}



\subsection{Performance Comparison} 
\label{subsec:performance_comparison}

We investigate the performance of these detection statistics by comparing a plot of its EDD versus $\log(\mbox{ARL})$. From theoretical analysis, we expect such plot of the CUSUM statistic to be close to linear. We first introduce a simple baseline: the Shewhart control chart \citep{shew-jamstaa-1925,shew-book-1931}, which counts the number of events in a sliding window and stops when the number of events falls out of a specific range:
$$
\Tsh = \inf \Bigg\{t:\, \sum_{i\in [D]} (N_t^i - N_{t-w}^i) \notin [b_1,b_2] \Bigg\}.
$$
This Shewhart chart can detect changes in average intensity. Note that it does not take into account the network structure or the location of events. In this example, we only consider the case where the average intensity will be increased after the change-point, and thus choose a one-sided interval by letting $b_1 = 0$.

Figure~\ref{fig:ARL_EDD} shows the comparison results. For both networks, the CUSUM procedure with exact post-change parameters achieves the best performance, followed by the score statistic and the GLR. All three methods are better than the baseline Shewhart chart. Window lengths are chosen numerically for the latter three procedures, such that the performance is (approximately) optimal for an ARL between 500 and 50000. The details are shown in Table \ref{tbl:1}. When simulating the EDD, we set the change-point $\kappa$ to be slightly larger than the window length $w$ so that it is possible for the EDD to be less than $w$.

\begin{figure}[htb!]
\centering
\begin{tabular}{cc}
\includegraphics[width=0.47\textwidth]{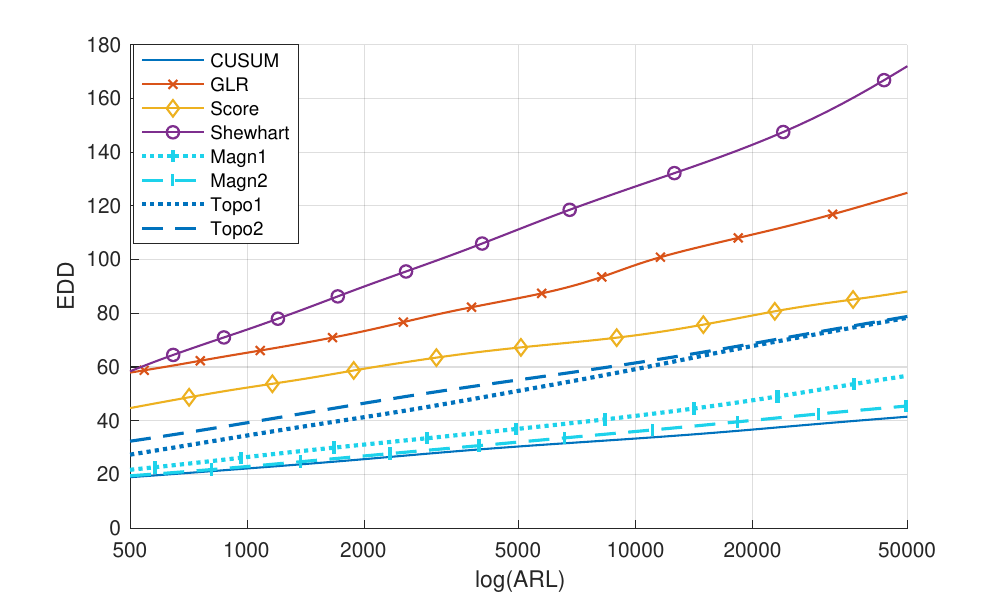}&
\includegraphics[width=0.47\textwidth]{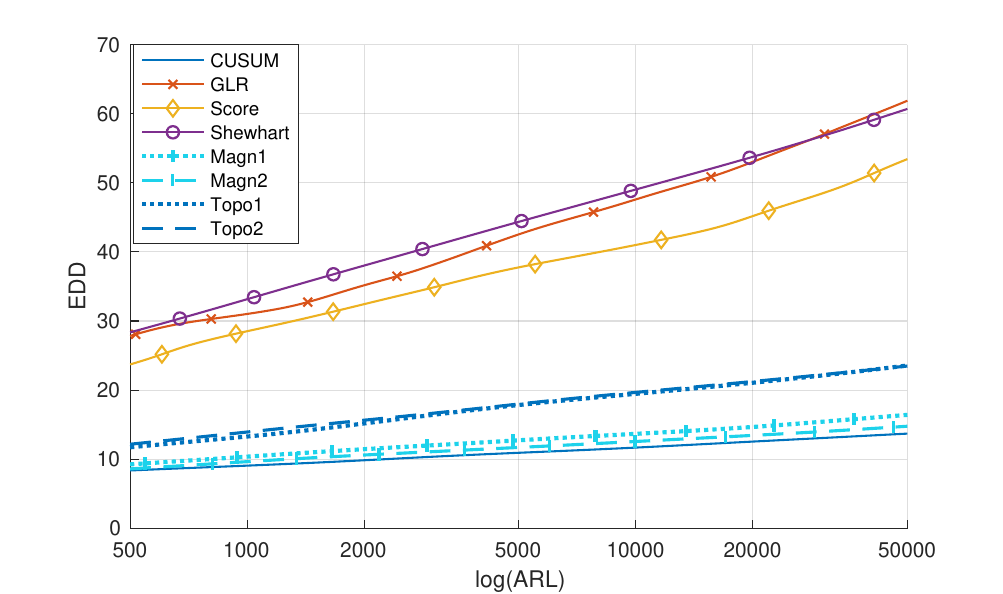}\\
(a)&(b)
\end{tabular}
\caption{Performance comparison of different detection statistics on (a) network of size 8, (b) network of size 100.}
\label{fig:ARL_EDD}
\end{figure}

\begin{table}[htb!]
    \centering
    \begin{tabular}{c|c|c|c|c|c}
    \hline
    & & CUSUM & GLR & Score & Shewhart\\\hline
    \multirow{2}{*}{Network size = 8} & $w$ & NA & 80 & 80 & 120\\\cline{2-6}
    & time &1.508& 37.14& 3.013& 0.046\\\hline
    \multirow{2}{*}{Network size = 100} & $w$ & NA & 40 &40 &60 \\\cline{2-6}
    & time &1.304 &54.89 &12.71& 0.030\\\hline
    \end{tabular}
    \caption{Optimal window length and running time of selected procedures.}
    \label{tbl:1}
\end{table}

\textbf{Running time.} Table \ref{tbl:1} also summarizes the average time in seconds needed to compute the selected procedures over a time horizon 50000, with no change-point on a personal computer (Apple M1 chip). The base intensity of the networks are scaled such that the two networks have similar average number of events. Clearly apart from the Shewhart chart, CUSUM enjoys the quickest running time and remains computationally efficient as the network size increases, thus demonstrating the scalability of the proposed approach.

\textbf{Misspecification in post-change parameters.} We also consider the CUSUM procedure when the assumed post-change parameter differs from the true parameters in Figure \ref{fig:ARL_EDD}, indicating a model mismatch. For the small network of size 8, while the real change occurs on two pairs of nodes making the influence factor both 0.4, we consider a CUSUM procedure which assumes a correct post-change network topology, but with magnitudes of the post-change parameters on the edges to be 200\% (Magn1) and 50\% (Magn2) of the true magnitudes, respectively. We also consider a CUSUM procedure which assumes the correct magnitude of the post-change parameter, but an incorrect post-change network topology. In one case, a change in the influence from $2\to 1$, $1\to 3$, $6\to 1$, $1\to 7$ is expected (Topo1), and in another, only the change in the influence from $2\to 1$ is expected (Topo2). 
 For the large network of size 100, we consider the case when the magnitudes of the post-change parameters are 200\% (Magn1) and 50\% (Magn2) of the true ones. For the topological model mismatch, recall that the true change happens on 200 edges, we select 200 more edges (Topo1) or drop 50 edges (Topo2), both at random as the misspecified cases. Even with misspecified post-change parameters in either influence magnitude or network topology, the CUSUM procedure can still achieves better performance than the GLR and the score procedures for both network settings. This demonstrates that the proposed CUSUM procedure is reasonably robust to the misspecification of the post-change parameters.

Though misspecified, the cases provided above still partly capture the true change. When the estimated post-change parameters deviate greatly from the true parameters, the CUSUM procedure can fail to achieve an EDD linear in $\log(\text{ARL})$, as shown in Figure \ref{fig:misspec}. For the network of size 8, the post-change parameters is estimated to perceive a change in the influence from $6\to 1$, $1\to 7$, with a magnitude of $0.4$. For the network of size 100, the misspecified post-change parameters select 200 edges randomly (independent of the true topology of the change) with a magnitude of $0.2$.

{\color{black} In real scenarios, the abrupt change may represent an unexpected anomaly, and we do not have enough data to estimate the post-change parameters. In such cases, we may choose a targeted topology of the post-change parameters to detect a certain type of structural change and choose the magnitude to reflect a minimum size of the change to be detected. For certain applications, it is also possible to enumerate the potential changes and run several detection procedures in parallel, each responsible for monitoring the process against one type of change. We can also see which type of change causes an alarm to help identify the change pattern and location. Alternatively, there are also adaptive CUSUM procedures \citep{xie2020sequential}, which use ``future'' samples to estimate a potential post-change parameter and use as a plug-in estimator in the CUSUM statistics. However, such methods may incur an additional delay in detection. }

\begin{figure}[htbb!]
\centering
\begin{tabular}{ccc}
\includegraphics[width=0.30\textwidth]{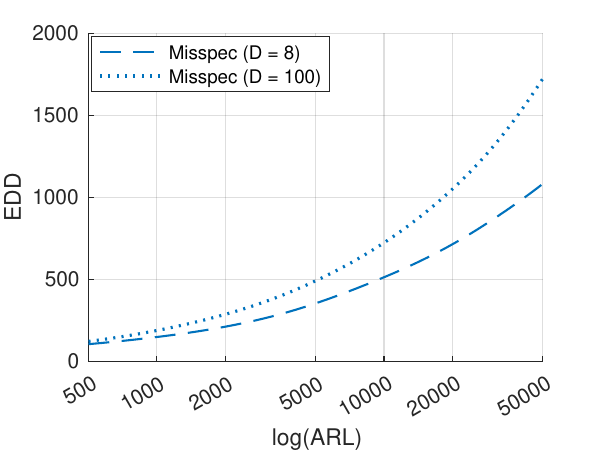}&
\includegraphics[width=0.30\textwidth]{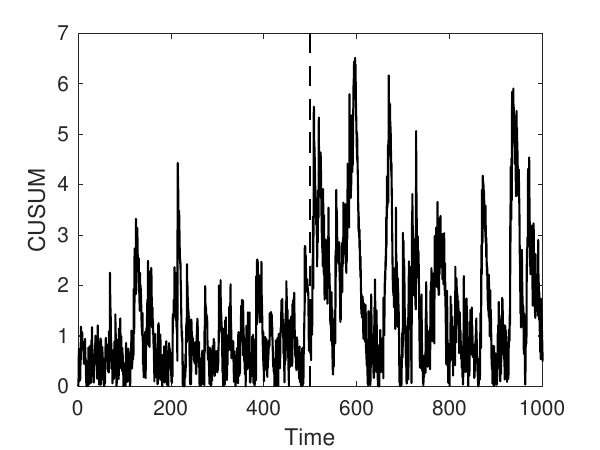}&
\includegraphics[width=0.30\textwidth]{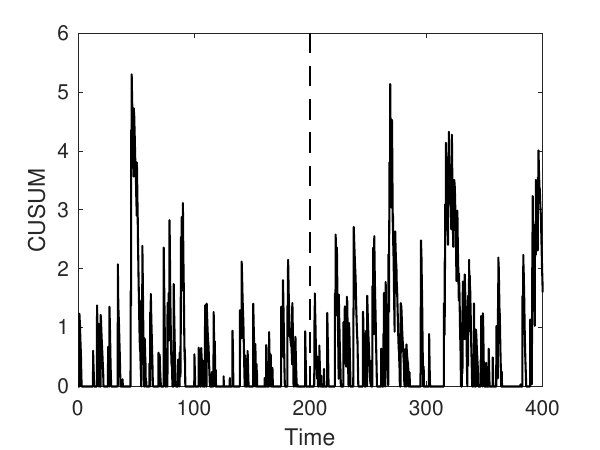}\\
(a)&(b)&(c)
\end{tabular}
\caption{Examples when CUSUM fails to detect under misspecified post-change parameters. (a) The EDD vs. $\log(\text{ARL})$ plot. (b)(c) Examples of the CUSUM procedure under misspecification for the network of size 8 and 100, using the same sequence of events with Figure \ref{fig:stats}, respectively.}
\label{fig:misspec}
\end{figure}

\subsection{Effect of Grid Size $\gamma$} 

As mentioned earlier, the choice of the grid size $\gamma$ involves a trade-off between algorithm performance and computational complexity. To investigate this trade-off, we compare the proposed CUSUM, the GLR, and the score statistics with a grid size $\gamma$ ranging from 0.1 to 50 on the network of size 8. For the EDD, we assume that the change-point is uniformly distributed between two grid points. Figure~\ref{fig:gamma} shows the effect of the grid size on CUSUM, the GLR, and the score procedure. We see that a large grid size $\gamma$ results in both a larger ARL and EDD. If we instead tune the threshold $b$ to fix the ARL, the EDD may still increase with a larger $\gamma$. 

To understand the effect of $\gamma$ on computation complexity, we will consider the GLR statistic as the computation for the GLR is the most expensive. To solve the convex optimization problem for each window, we use the EM algorithm as described in \cite{li2017detecting}, and terminate when the update in the log-likelihood is less than $10^{-4}$. Figure~\ref{fig:gamma}(c) shows the average iterations needed per window for different grid sizes. We see that, as $\gamma$ increases, the computation required (in terms of number of iterations) increases as well, which matches intuition.

\begin{figure}[htbp!]
\centering
\begin{tabular}{ccc}
\includegraphics[width=0.29\textwidth]{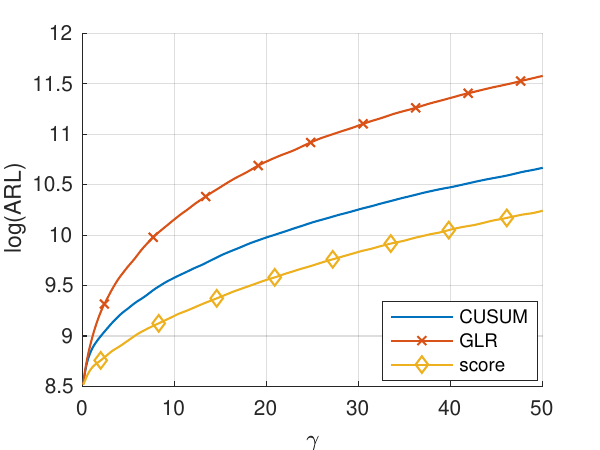} & \includegraphics[width=0.29\textwidth]{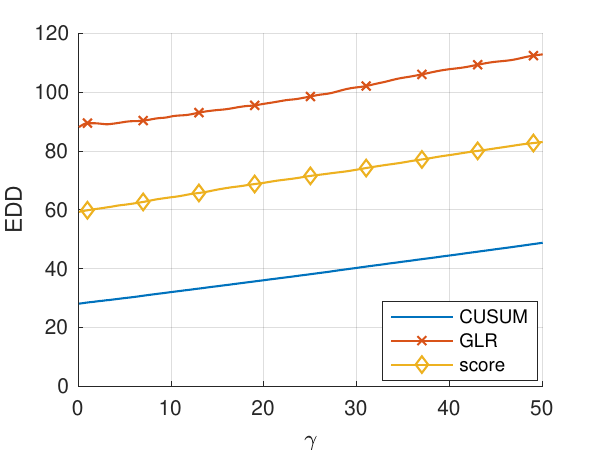} & \includegraphics[width=0.29\textwidth]{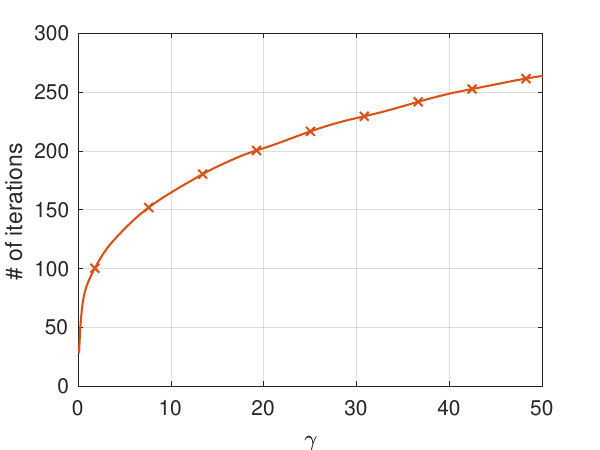}\\
(a) ARL for fixed threshold $b$ & (b) EDD for ARL $=5000$ & 
(c) Iterations per window 
\end{tabular}
\caption{Effect of the grid size $\gamma$ on CUSUM, GLR, and score statistics. The window length for the GLR and the score statistics is 60 and for Shewhart is 120, which are optimized for better EDD performance. (a) shows the effect of $\gamma$ on the ARL at fixed threshold $b = 6.319, 37.66, 148.2$ for CUSUM, the GLR and the score statistics respectively, making ARL $=5000$ at $\gamma = 0.1$. (b) shows the effect of $\gamma$ on the performance by tuning $b$ so that ARL is 5000. (c) shows the average number of iterations per window needed in the GLR statistic for the optimization problem to converge.}
\label{fig:gamma}
\end{figure}

\section{Detecting Neuronal Network Population Code Change}\label{sec:realdata}

We now return to the motivating problem of detecting population code changes in neuronal networks. The data considered are neural spike trains, which record the sequence of times when a neuron fires an action potential. The multivariate Hawkes processes from Section~\ref{sec:multivariate_Hawkes} have been used for modeling spike train data \citep{lambert2018reconstructing,wang2020uncertainty}, and capture two appealing features for neuronal networks. First, the base intensities $\mu_i$ capture noisy influences on neuron $i$, resulting from either unobserved neurons or external stimuli. Second, the influence parameters $\alpha_{ij}$ capture the functional influence from neuron $j$ to neuron $i$ due to electrochemical dynamics. 

We are interested in detecting the change-point in the underlying population code from neural data. These are {\it abrupt} changes, as the behavior of populations of neurons respond quickly (usually in just a few ms) to changing input. Population codes are a distributed representation of information used widely across many neural architectures and have been most widely documented in the cortex. As opposed to dense representations, population codes consist of sparsely activated subsets of neurons in which the information is distributed amongst the entire subset. Figure~\ref{fig:popcode} illustrates this idea. The left plots show a plausible neuronal network topology for the population coding of seeing a cat or a dog. The right plots show the corresponding spike train data on the neuronal network, as one changes states from a cat to a dog. Identifying this population code change-point from experimental data provides scientists a better understanding of the role of each neuron, which can be used for repairing neuronal networks.
\begin{figure}[htbp!]
\centering
\begin{tabular}{cc}
\includegraphics[width=0.41\textwidth]{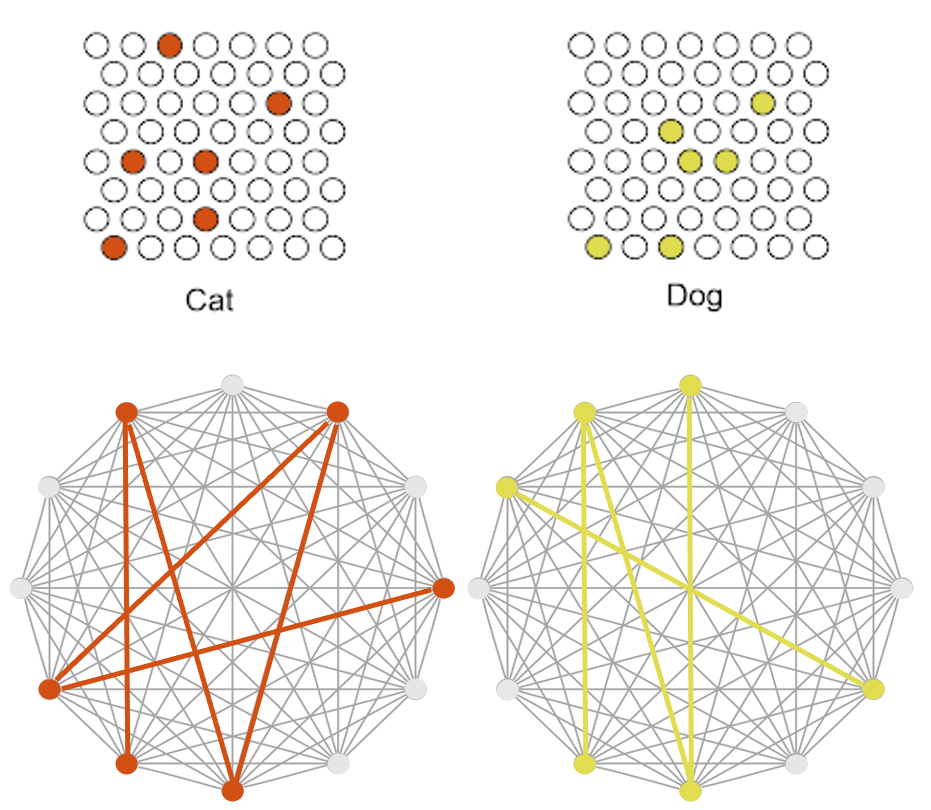}&
\includegraphics[scale=0.55]{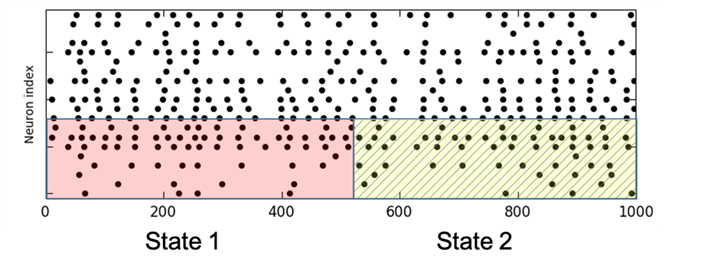}\\
\end{tabular}
\caption{(Left) A plausible network topology for `Cat' and `Dog'; (Right) Spike train data for each neuron under a population code change.}
\label{fig:popcode}
\end{figure}

Here, the \textit{sequential} nature of change-point detection can be advantageous for practical implementation. Real-time detection of biological neural networks is known as “continuous detection” in the neuroscience literature (see, e.g., \citealp{goense2003continuous}). This is in contrast to the more standard ``trial-based'' (or fixed-sample) testing, where the beginning and duration of the testing interval are pre-determined. A key advantage of continuous detection over trial-based testing is a reduction in experimental sample size \citep{goense2003continuous}: the experiment terminates after a change-point is detected, and does not need to run for the full testing period. This yields considerable cost savings for experiments, and speeds up the decision-making procedure. Continuous change-point detection is a capability vital to the success of neural interfaces, devices that monitor and decode the activity of a subject's brain. These neural interfaces being widely used in modern neuroengineering problems to restore capabilities to patients, e.g., manipulating a robotic hand or even typing on a virtual keyboard. Underlying the use of real-time neural interfaces is the ability to detect changes in population codes in real time.

While there have been significant advances in neuroimaging technology, it can still be quite costly to record fine-scale spiking data through in-vivo (i.e., \textit{physical}) experiments. To illustrate the proposed method, we instead simulate the spike train data using the PyNN package with the NEURON simulator, which implements the neuronal model in \cite{brette2005adaptive}. We build off previous work in neural simulation and use a network of exponential integrate-and-fire neurons with spike triggered and sub-threshold adaptation currents. This can be viewed as a \textit{computer} experiment surrogate for the expensive physical experiments, which we cannot obtain due to high costs.

The simulation set-up is as follows. We first simulate several small networks of neurons in a balance of 80-20 excitatory to inhibitory neurons, with network size fixed at $D$ = 14 neurons. From this, we obtain a continuous readout of each neuron spiking data. Each neuron then receives a small Gaussian noise current, representing random external influence on the network. In addition, a select few neurons receive inputs from an external source, which represents the phenomenon of sparse population coding. The neurons that spike at higher rates form a distributed representation of the network state.

We then randomly selected two such subsets of neurons, representing two different states. We simulate the network in the first state for a long time (from $t$ = 0 - 20,000 ms) to learn network dynamics and structure under the first population code. The pre-change Hawkes process parameters are obtained via maximum likelihood estimation (MLE) on the pre-change spike train data. After $\kappa$ = 20,000 ms, we then simulate a change from the first to the second state. The goal is to quickly detect the change-point $\kappa$ in a sequential fashion from the simulated spike trains. For CUSUM, the post-change Hawkes process parameters are estimated via MLE on the post-change spike trains. For the score statistic, the pre-change Fisher information matrix $I_0$ is highly ill-conditioned when estimated from spike trains, so we instead use a slightly regularized estimate $I_0 + \lambda I$, where $\lambda = 1$ and $I$ is an identity matrix. The estimated pre-change and post-change models are shown in Figure~\ref{fig:neuron_model}.

\begin{figure}[htbp!]
\centering
\begin{tabular}{ccc}
\includegraphics[width = 0.3\linewidth]{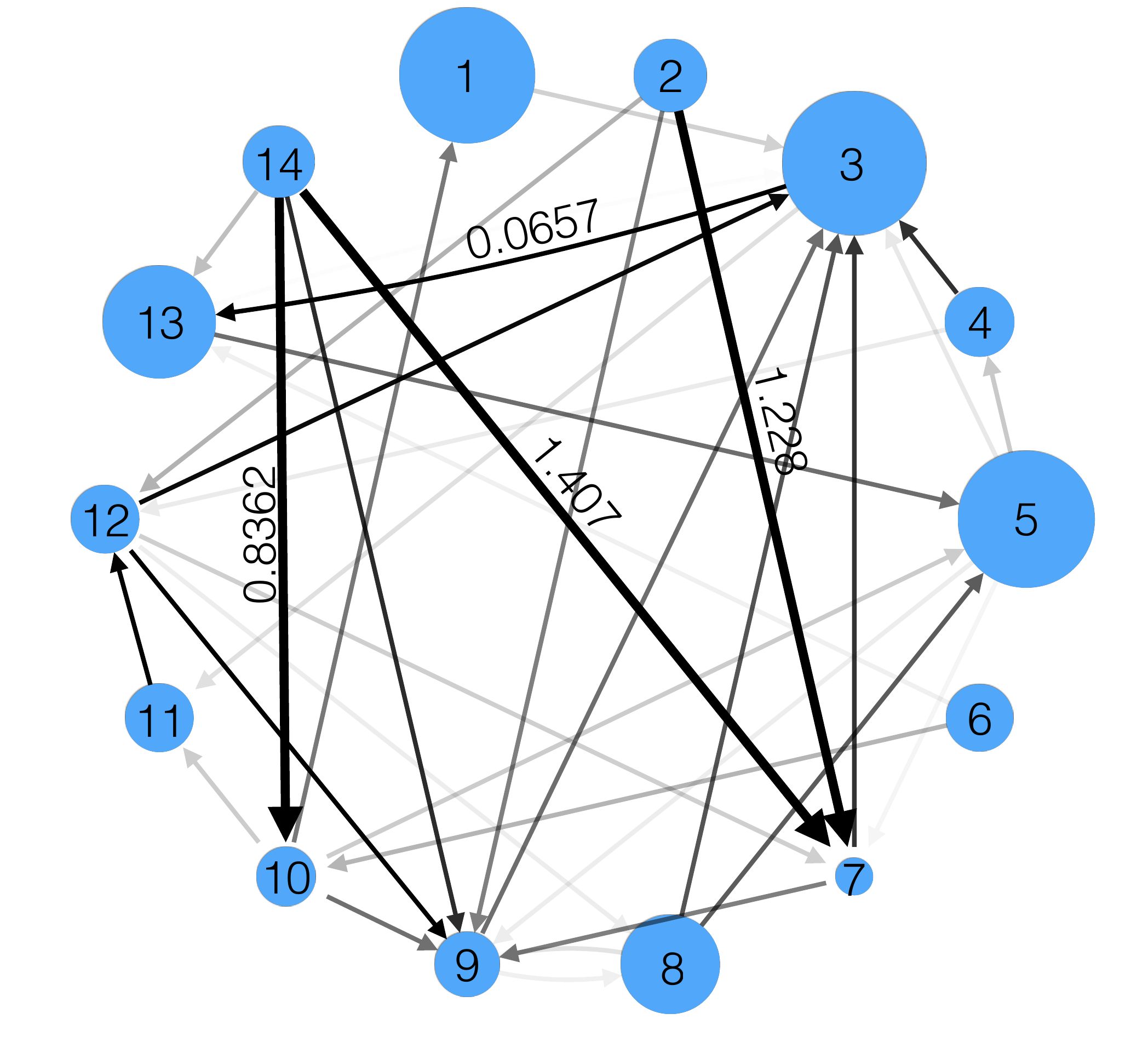} & \includegraphics[width = 0.3\linewidth]{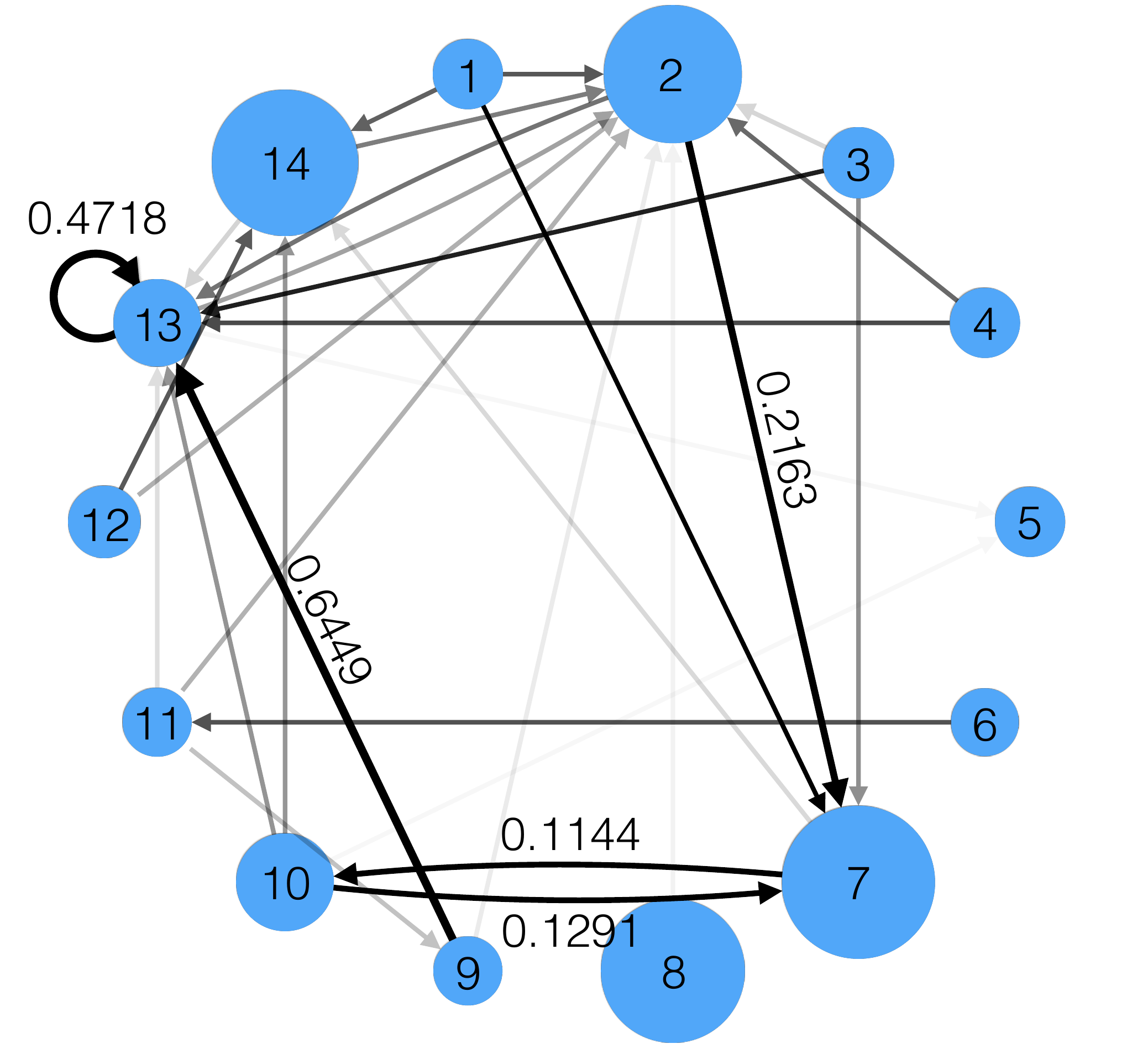} & \includegraphics[width = 0.3\linewidth]{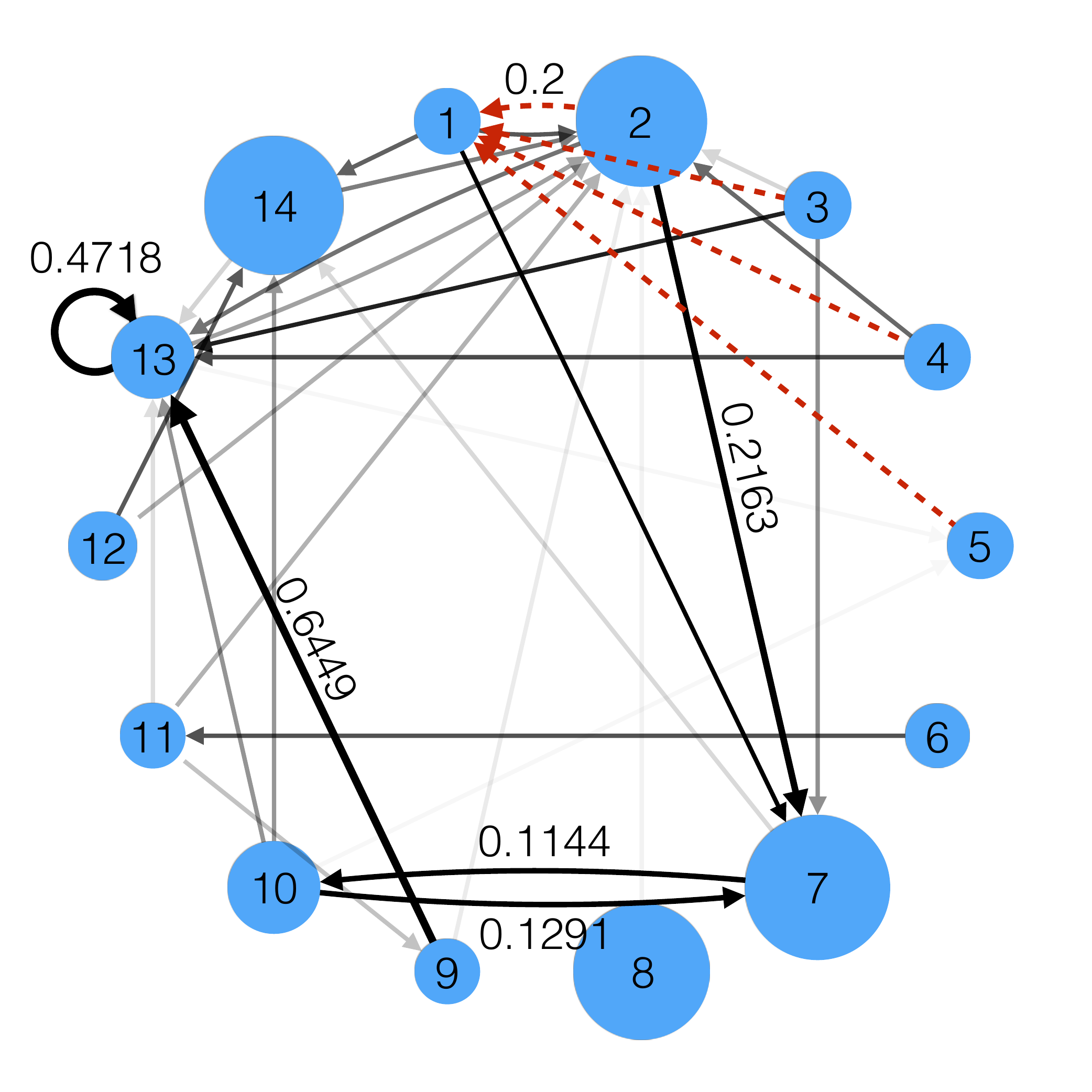}\\
(a) Pre-change & (b) Post-change & (c) Mis-specified post-change
\end{tabular}
\caption{(a)(b) shows the estimated pre-change and post-change parameters. The size of the nodes are proportional to their background intensities ranging from 0 to 0.016. The opacity of the unlabeled edges are proportional to their weights, ranging from 0 to 0.06. (c) shows a case of post-change topology misspecification in CUSUM statistic.}
\label{fig:neuron_model}
\end{figure}

\begin{figure}[htb!]
\centering
\begin{tabular}{ccc}
\includegraphics[width=0.3\textwidth]{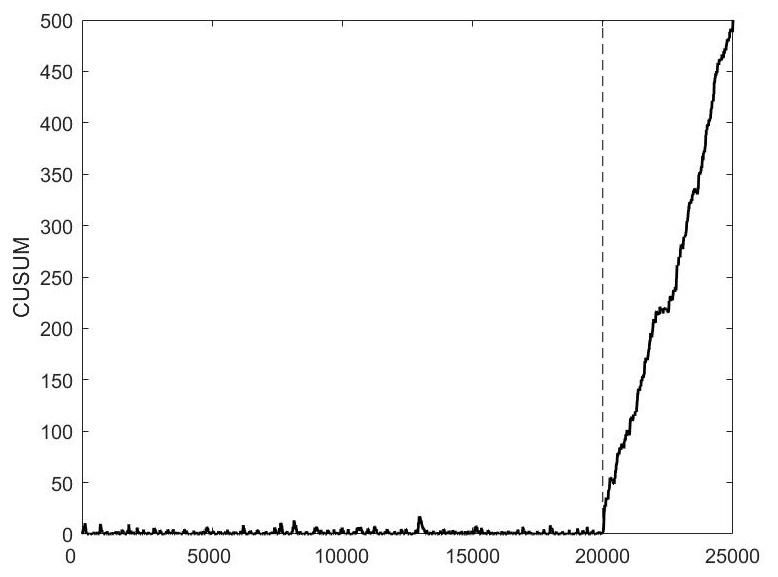} & \includegraphics[width=0.3\textwidth]{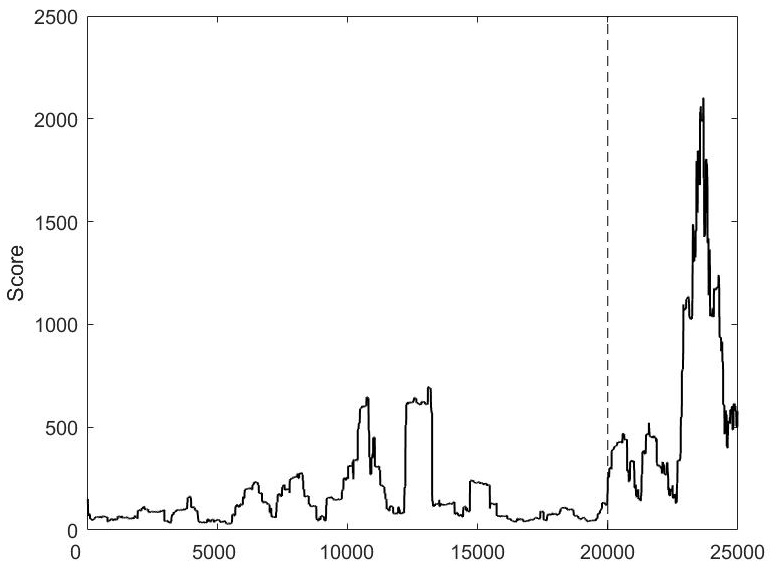} & \includegraphics[width=0.3\textwidth]{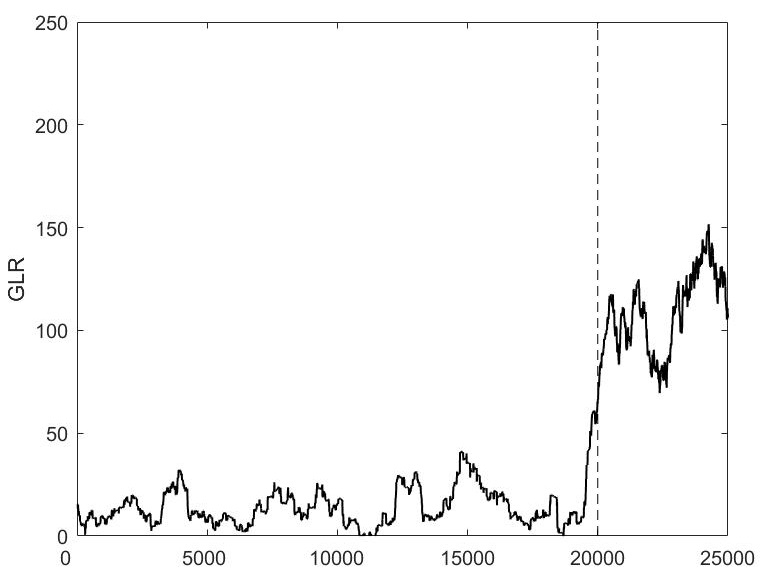}\\
(a) CUSUM & (b) Score stat & (c) GLR
\end{tabular}
\caption{The CUSUM, score, and GLR statistics for the neuronal network application with $D=14$ nodes. The true change-point is indicated by the dashed line.}
\label{fig:neurostats}
\end{figure}

\begin{figure}[htbp!]
\centering
\begin{tabular}{ccc}
\includegraphics[width=0.3\textwidth]{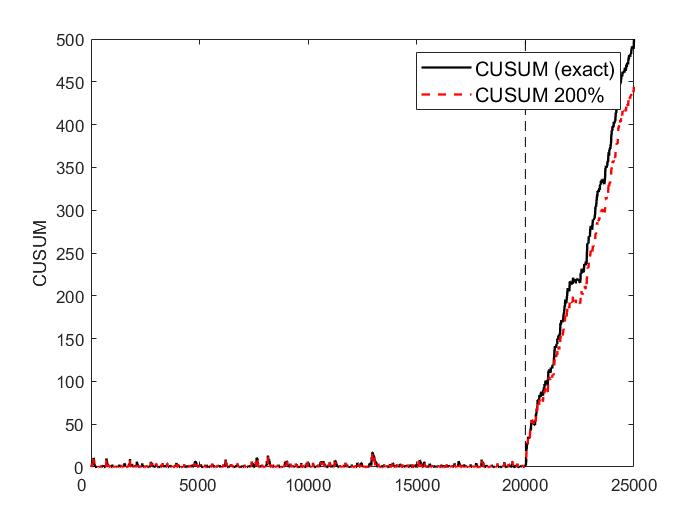} & \includegraphics[width=0.3\textwidth]{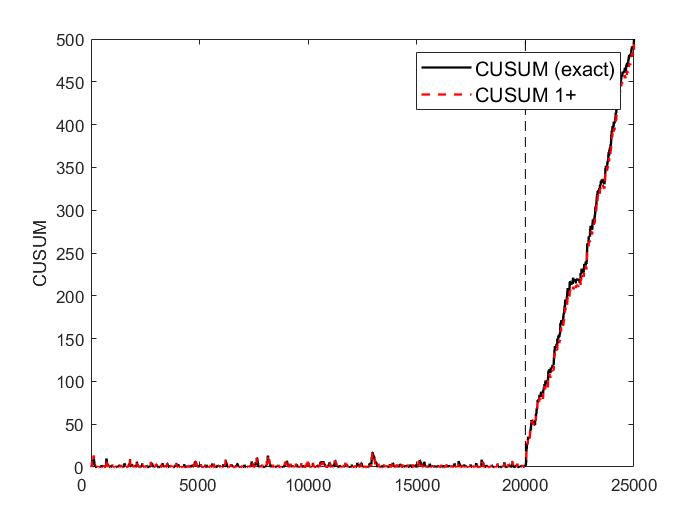} & \includegraphics[width=0.3\textwidth]{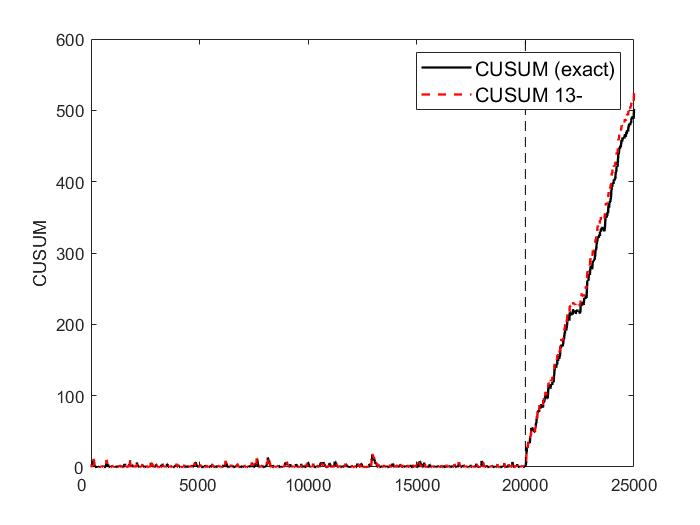}\\
(a) Influence misspecification & (b) Added edges & (c) Missing edges 
\end{tabular}
\caption{CUSUM statistics under different misspecifications of post-change parameters. The red line shows the CUSUM statistic under an ``exact'' specification of post-change parameters, and the dash line indicates the change-point.}
\label{fig:neurostatsms}
\end{figure}

Figure~\ref{fig:neurostats} shows the CUSUM, GLR, and score statistics, respectively, with the dashed line indicating the change-point in population code. Both the score and GLR statistics utilize a window size of 1,000 and an update rate of $\gamma$ = 5. As in numerical experiments, we see that the CUSUM statistic increases rapidly after the change-point, which shows it is quite effective at detecting the underlying neuronal network changes. The score and GLR statistics are also noticeably larger after the change-point, with the increase in GLR more prominent than the increase for the score statistic. The increases in GLR and score statistics are noticeably lower than that for the proposed CUSUM procedure, which suggests that our method can better detect population code changes in neuronal networks.


Next, we consider the case where post-change parameter estimates are misspecified for the CUSUM statistic. This may arise, e.g., when there is a lack of spike train data for post-change parameter estimation. We consider three scenarios for misspecification: (a) the post-change topology is correct, but the influence parameters are scaled at 200\%, (b) the influence parameters are correct, but there are spurious edges on neuron 1 for the topology (see Figure \ref{fig:neurostatsms}(b)), (c) the post-change influence parameters are correct, but all the edges to neuron 13 are missing for the topology. Figure~\ref{fig:neurostatsms} shows the CUSUM statistics for these three scenarios, along with the ``exact'' CUSUM statistics, which use exact post-change MLEs. We see that the CUSUM is quite robust: its CUSUM statistics are quite close to the exact CUSUM for both influence and topology misspecifications. Hence, our method appears to efficiently detect population code changes, even under uncertainties in post-change parameter estimation.

\begin{figure}[htbp!]
\centering
\begin{tabular}{ccc}
\includegraphics[width=0.3\textwidth]{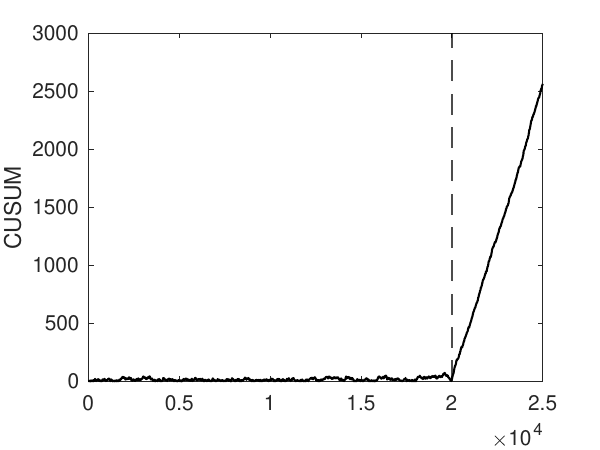} & \includegraphics[width=0.3\textwidth]{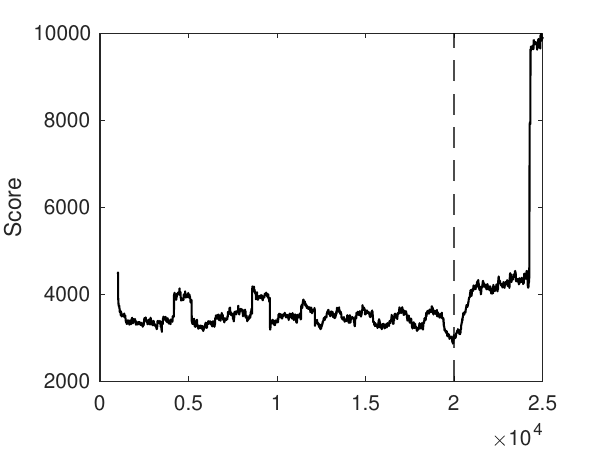} & \includegraphics[width=0.3\textwidth]{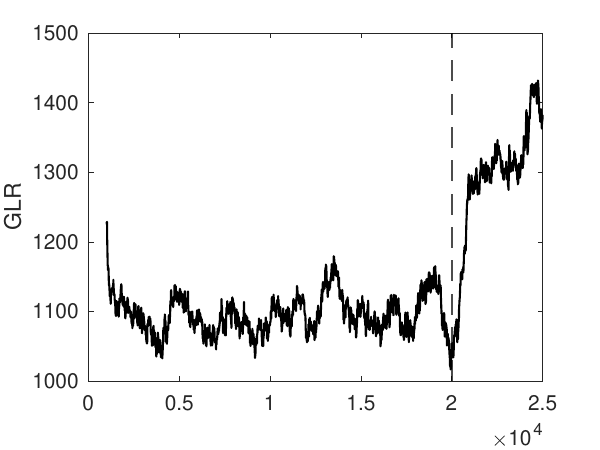}\\
(a) CUSUM & (b) Score stat & (c) GLR
\end{tabular}
\caption{The CUSUM, score, and GLR statistics for the neuronal network application with $D=80$ nodes. The true change-point is indicated by the dashed line.}
\label{fig:neurostats2}
\end{figure}

Finally, we investigate the scalability of these methods by increasing the network size to $D = 80$ neurons, with all simulation and experimental settings fixed as before. {\color{black} Due to the inaccuracies in estimating the pre-change parameters, the CUSUM procedure has a positive drift in the pre-change scenario, which disagrees with the theoretical property needed for CUSUM -- before the change, the expected drift should be negative; otherwise, there will be constant false alarms raised. To address this issue, as a common practice, we subtract a positive constant 0.2 from the increment when forming the CUSUM procedure from all $\ell_{n\gamma,\cdot}$, which will ensure the drift term has a negative expected value.} Figure \ref{fig:neurostats2} shows the corresponding CUSUM, GLR, and score statistics, with the dashed line indicating the change-point in population code. We see similar observations as before: the CUSUM statistic increases rapidly after the change-point, whereas the increases in the GLR and score statistics are much more subtle, thus indicating our approach can provide better detection of population codes. As in simulation experiments, the computation time favors our method: the proposed CUSUM procedure requires 0.44 seconds on the 80-node network, whereas the GLR and score procedures require 32.48 and 4.63 seconds, respectively. This shows the improved performance and efficiency of our recursive CUSUM approach.

\section{Conclusion and Discussions}\label{sec:conclusion}
We have presented a new sequential CUSUM procedure for detecting change-point in the multi-dimensional self- and mutual-exciting point processes, i.e., network Hawkes processes. By tackling the complex and long-term dependence between event times, we develop the CUSUM procedure that enjoys efficient recursive computation and memory efficiency if we employ truncation. Using numerical experiments, we showed that the CUSUM procedure yields improved performance over existing detection procedures (Shewhart-type) based on score statistics and generalized likelihood ratio (GLR) statistics. Moreover, we found that, although the CUSUM procedure requires specifying the post-change distribution parameters, it is fairly robust to parameter misspecification {\color{black} when it is possible to estimate the topology and magnitude of a potential abrupt change} and outperforms existing methods in that setting. This can be partly explained by that these alternative methods are the Shewhart-type approaches (based on evaluating a detection statistic using a sliding window), which does not accumulate information from the past. 
We also demonstrated a realistic neuroengineering application of our procedure for neuronal network change-point detection.

\section*{Acknowledgement}

The work of Haoyun Wang, Liyan Xie, and Yao Xie were partially supported by an NSF CAREER  CCF-1650913, and NSF DMS-1830210.



\nocite{tartakovsky2019sequential}
\nocite{reynaud2007some}
\bibliographystyle{apalike}
\bibliography{Hawkes}

\begin{thebibliography}{}

\bibitem[Achab et~al., 2017]{achab2017uncovering}
Achab, M., Bacry, E., Ga{\"\i}ffas, S., Mastromatteo, I., and Muzy, J.-F.
  (2017).
\newblock Uncovering causality from multivariate {Hawkes} integrated cumulants.
\newblock {\em The Journal of Machine Learning Research}, 18(1):6998--7025.

\bibitem[Brette and Gerstner, 2005]{brette2005adaptive}
Brette, R. and Gerstner, W. (2005).
\newblock Adaptive exponential integrate-and-fire model as an effective
  description of neuronal activity.
\newblock {\em Journal of Neurophysiology}, 94(5):3637--3642.

\bibitem[Chiang et~al., 2021]{chiang2021hawkes}
Chiang, W.-H., Liu, X., and Mohler, G. (2021).
\newblock Hawkes process modeling of covid-19 with mobility leading indicators
  and spatial covariates.
\newblock {\em International journal of forecasting}.

\bibitem[Daley and Vere-Jones, 2003]{daley2003introduction}
Daley, D.~J. and Vere-Jones, D. (2003).
\newblock {\em An Introduction to the Theory of Point Processes: Volume I:
  Elementary Theory and Methods}.
\newblock Springer.

\bibitem[Eliasmith and Anderson, 2003]{eliasmith2003neural}
Eliasmith, C. and Anderson, C.~H. (2003).
\newblock {\em Neural Engineering: Computation, Representation, and Dynamics in
  Neurobiological Systems}.
\newblock MIT Press.

\bibitem[Goense and Ratnam, 2003]{goense2003continuous}
Goense, J. and Ratnam, R. (2003).
\newblock Continuous detection of weak sensory signals in afferent spike
  trains: the role of anti-correlated interspike intervals in detection
  performance.
\newblock {\em Journal of Comparative Physiology A}, 189(10):741--759.

\bibitem[Hawkes, 1971]{hawkes1971spectra}
Hawkes, A.~G. (1971).
\newblock Spectra of some self-exciting and mutually exciting point processes.
\newblock {\em Biometrika}, 58(1):83--90.

\bibitem[Hawkes, 2018]{hawkes2018hawkes}
Hawkes, A.~G. (2018).
\newblock Hawkes processes and their applications to finance: A review.
\newblock {\em Quantitative Finance}, 18(2):193--198.

\bibitem[Lambert et~al., 2018]{lambert2018reconstructing}
Lambert, R.~C., Tuleau-Malot, C., Bessaih, T., Rivoirard, V., Bouret, Y.,
  Leresche, N., and Reynaud-Bouret, P. (2018).
\newblock Reconstructing the functional connectivity of multiple spike trains
  using {H}awkes models.
\newblock {\em Journal of Neuroscience Methods}, 297:9--21.

\bibitem[Li et~al., 2017]{li2017detecting}
Li, S., Xie, Y., Farajtabar, M., Verma, A., and Song, L. (2017).
\newblock Detecting changes in dynamic events over networks.
\newblock {\em IEEE Transactions on Signal and Information Processing over
  Networks}, 3(2):346--359.

\bibitem[Lorden, 1971]{Lorden1971}
Lorden, G. (1971).
\newblock Procedures for reacting to a change in distribution.
\newblock {\em Annals of Mathematical Statistics}, 42(6):1897--1908.

\bibitem[Mohler et~al., 2011]{mohler2011self}
Mohler, G.~O., Short, M.~B., Brantingham, P.~J., Schoenberg, F.~P., and Tita,
  G.~E. (2011).
\newblock Self-exciting point process modeling of crime.
\newblock {\em Journal of the American Statistical Association},
  106(493):100--108.

\bibitem[Ogata, 1978]{ogata1978asymptotic}
Ogata, Y. (1978).
\newblock The asymptotic behaviour of maximum likelihood estimators for
  stationary point processes.
\newblock {\em Annals of the Institute of Statistical Mathematics},
  30(1):243--261.

\bibitem[Ogata, 1988]{ogata1988statistical}
Ogata, Y. (1988).
\newblock Statistical models for earthquake occurrences and residual analysis
  for point processes.
\newblock {\em Journal of the American Statistical Association}, 83(401):9--27.

\bibitem[Ogata, 1998]{ogata1998space}
Ogata, Y. (1998).
\newblock Space-time point-process models for earthquake occurrences.
\newblock {\em Annals of the Institute of Statistical Mathematics},
  50(2):379--402.

\bibitem[Page, 1954]{page-biometrica-1954}
Page, E.~S. (1954).
\newblock Continuous inspection schemes.
\newblock {\em Biometrika}, 41(1/2):100--115.

\bibitem[Pollak, 1985]{pollak1985optimal}
Pollak, M. (1985).
\newblock Optimal detection of a change in distribution.
\newblock {\em Annals of Statistics}, 13(1):206--227.

\bibitem[Rambaldi et~al., 2018]{rambaldi2018detection}
Rambaldi, M., Filimonov, V., and Lillo, F. (2018).
\newblock Detection of intensity bursts using {Hawkes} processes: An
  application to high-frequency financial data.
\newblock {\em Physical Review E}, 97(3):032318.

\bibitem[Rasmussen, 2011]{rasmussen2011temporal}
Rasmussen, J.~G. (2011).
\newblock Temporal point processes: The conditional intensity function.
\newblock {\em Lecture Notes, Jan}.

\bibitem[Reinhart, 2018]{reinhart2018review}
Reinhart, A. (2018).
\newblock A review of self-exciting spatio-temporal point processes and their
  applications.
\newblock {\em Statistical Science}, 33(3):299--318.

\bibitem[Reynaud-Bouret et~al., 2013]{reynaud2013inference}
Reynaud-Bouret, P., Rivoirard, V., and Tuleau-Malot, C. (2013).
\newblock Inference of functional connectivity in neurosciences via {Hawkes}
  processes.
\newblock In {\em IEEE Global Conference on Signal and Information Processing},
  pages 317--320. IEEE.

\bibitem[Reynaud-Bouret and Roy, 2007]{reynaud2007some}
Reynaud-Bouret, P. and Roy, E. (2007).
\newblock Some non asymptotic tail estimates for {Hawkes} processes.
\newblock {\em Bulletin of the Belgian Mathematical Society-Simon Stevin},
  13(5):883--896.

\bibitem[Rizoiu et~al., 2018]{rizoiu2018sir}
Rizoiu, M.-A., Mishra, S., Kong, Q., Carman, M., and Xie, L. (2018).
\newblock {SIR-Hawkes}: Linking epidemic models and {Hawkes} processes to model
  diffusions in finite populations.
\newblock In {\em Proceedings of the 2018 World Wide Web Conference}, pages
  419--428.

\bibitem[Shewhart, 1925]{shew-jamstaa-1925}
Shewhart, W.~A. (1925).
\newblock The application of statistics as an aid in maintaining quality of a
  manufactured product.
\newblock {\em Journal of the American Statistical Association},
  20(152):546--548.

\bibitem[Shewhart, 1931]{shew-book-1931}
Shewhart, W.~A. (1931).
\newblock {\em Economic Control of Quality of Manufactured Product}.
\newblock American Society for Quality Control.

\bibitem[Tang et~al., 2018]{tang2018large}
Tang, S., Zhang, Y., Li, Z., Li, M., Liu, F., Jiang, H., and Lee, T.~S. (2018).
\newblock Large-scale two-photon imaging revealed super-sparse population codes
  in the {V1} superficial layer of awake monkeys.
\newblock {\em Elife}, 7:e33370.

\bibitem[Tartakovsky et~al., 2015]{tartakovsky2014sequential}
Tartakovsky, A., Nikiforov, I., and Basseville, M. (2015).
\newblock {\em Sequential Analysis: Hypothesis Testing and Changepoint
  Detection}.
\newblock ser. Monographs on Statistics and Applied Probability 136. Boca
  Raton, London, New York: Chapman \& Hall/CRC Press, Taylor \& Francis Group.

\bibitem[Tartakovsky, 2020]{tartakovsky2019sequential}
Tartakovsky, A.~G. (2020).
\newblock {\em Sequential Change Detection and Hypothesis Testing: General
  Non-iid Stochastic Models and Asymptotically Optimal Rules}.
\newblock ser. Monographs on Statistics and Applied Probability 165. Boca
  Raton, London, New York: Chapman \& Hall/CRC Press, Taylor \& Francis Group.

\bibitem[Wang et~al., 2020a]{wang2020detecting}
Wang, D., Yu, Y., and Willett, R. (2020a).
\newblock Detecting abrupt changes in high-dimensional self-exciting {Poisson}
  processes.
\newblock {\em arXiv preprint arXiv:2006.03572}.

\bibitem[Wang et~al., 2020b]{wang2020uncertainty}
Wang, H., Xie, L., Cuozzo, A., Mak, S., and Xie, Y. (2020b).
\newblock Uncertainty quantification for inferring {Hawkes} networks.
\newblock {\em Advances in Neural Information Processing Systems}, 33.

\bibitem[Xie et~al., 2020]{xie2020sequential}
Xie, L., Xie, Y., and Moustakides, G.~V. (2020).
\newblock Sequential subspace change point detection.
\newblock {\em Sequential Analysis}, 39(3):307--335.

\bibitem[Xie et~al., 2021]{reviewCP2021}
Xie, L., Zou, S., Xie, Y., and Veeravalli., V.~V. (2021).
\newblock Sequential change detection: Classical results and new directions.
\newblock {\em IEEE Journal on Selected Areas in Information Theory}.

\bibitem[Xie and Siegmund, 2012]{xie2012spectrum}
Xie, Y. and Siegmund, D. (2012).
\newblock Spectrum opportunity detection with weak and correlated signals.
\newblock In {\em Conference Record of the Forty Sixth Asilomar Conference on
  Signals, Systems and Computers (ASILOMAR)}, pages 128--132. IEEE.

\bibitem[Yang and Zha, 2013]{yang2013mixture}
Yang, S.-H. and Zha, H. (2013).
\newblock Mixture of mutually exciting processes for viral diffusion.
\newblock In {\em International Conference on Machine Learning}, pages 1--9.
  PMLR.

\bibitem[Zhou et~al., 2020]{zhou2020fast}
Zhou, F., Li, Z., Fan, X., Wang, Y., Sowmya, A., and Chen, F. (2020).
\newblock Fast multi-resolution segmentation for nonstationary {Hawkes} process
  using cumulants.
\newblock {\em International Journal of Data Science and Analytics},
  10:321--330.

\end{thebibliography}


\newpage
\begin{center}
{\Large \bf Supplementary Material for ``Sequential Change-Point Detection for Mutually Exciting Point Processes''}
\author{\large Haoyun Wang, Liyan~Xie, Yao~Xie\hspace{.2cm}\\
    H. Milton Stewart School of Industrial and Systems Engineering\\
     Georgia Institute of Technology\\
    and \\
    Alex Cuozzo, Simon Mak\hspace{.2cm} \\
    Department of Statistical Science, Duke University}
\end{center}

\appendix

\section{Proof of Lemma 1}
\begin{proof}
For fixed event and any $\tau \in (t_k,\min\{t_{k+1},t\}]$, the intensity $\lambda_{i,\tau}(s)$ at any time $s\geq\tau$ is a constant that does not depend on $\tau$, and the first part of the log-likelihood ratio \eqref{eq:log-likelihood_ratio},
$$
\sum_{i\in[D]}\int_{\tau}^t\log \left( \frac{\lambda_{i,\tau}(s)}{\lambda_{i,\infty}(s)} \right) dN_s^i,
$$
is also a constant. 
For the second part, 
$$
-\sum_{i\in[D]}\int_{\tau}^t (\lambda_{i,\tau}(s)-\lambda_{i,\infty}(s)) ds,
$$
the integrand is no larger than 0 because $\lambda_{i,\tau}(s) = \mu_i(s) \leq \lambda_{i,\infty(s)}$ for any $\tau<s\leq t_{k+1}$. Therefore the supremum of $\ell_{t,\tau}$ is reached when $\tau \to t_k^+$, which is $$
\sup_{t_k<\tau\leq \min\{t_{k+1},t\}}\ell_{t,\tau} = \ell_{t,t_k^+}.
$$
For the supremum w.r.t. $\tau$ over $[0,t_1]$, the equality can be derived using similar arguments.
\end{proof}

\section{Theoretical properties}
\label{sec:theoretical_properties}

We study the theoretical properties of the exact CUSUM procedure, as presented in Algorithm~\ref{alg:CUSUM}. The theoretical analysis is important for a practical implementation of the proposed method, since it guides a proper choice of threshold for the detection algorithm. 

We first establish a lower bound for ARL regarding the detection threshold $b$. Typically for the i.i.d. observation setting, the ARL grows exponentially with respect to the threshold $b$. Although the considered Hawkes process model is a continuous time procedure, it is naturally discretized by event times. Interestingly, even in this continuous setting, we obtain a similar result for the ARL of the proposed procedure:
\begin{theorem}[ARL of CUSUM]
In Algorithm~\ref{alg:CUSUM} under $H_0$, the number of events happened before $T_{\scriptscriptstyle \text{\emph{C}}}$ satisfies
$
\mathbb E_\infty\left[|\{k:t_k\leq T_{\scriptscriptstyle \text{\emph{C}}}\}|\right]\geq
\mathbb E_\infty\left[|\{k:t_k\leq\hat\tau\}|\right]\geq \left(\frac{1-o(1)}{2}\right)e^b.
$
\label{thm:arl}
\end{theorem}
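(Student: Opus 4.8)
The first inequality is immediate: at the moment of the alarm the maximizing hypothesized change-point satisfies $\hat\tau \le \Tc$, so $\{k : t_k \le \hat\tau\} \subseteq \{k : t_k \le \Tc\}$ and the expectations are ordered accordingly. The whole content is therefore the lower bound on $\mathbb E_\infty[K]$, where $K := |\{k : t_k \le \hat\tau\}|$ is the number of events preceding the estimated change-point. By Lemma~\ref{lem:maximizer} the maximizer $\hat\tau$ is always one of the ``restart points'' $t_m^+$ with $m\ge 1$, or $\tau = 0$, which I write as $t_0^+$ with $t_0:=0$; hence $\hat\tau = t_K^+$ and $K$ is exactly the index of the restart that triggers the alarm.

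The plan is to attach one likelihood-ratio martingale to each restart point and then combine a maximal inequality with a union bound. For each $m\ge 0$ set $\Lambda_t^{(m)} := e^{\ell_{t,\,t_m^+}}$ for $t\ge t_m$. Since $e^{\ell_{t,\tau}}$ is precisely the Radon--Nikodym derivative on $\mathcal F_t$ of the change-at-$\tau$ law with respect to $\mathbb P_\infty$, a direct computation with the point-process stochastic calculus shows that, under $\mathbb P_\infty$, the compensator of the jumps of $\Lambda^{(m)}$ equals $\Lambda^{(m)}_{s^-}\sum_{i}(\lambda_{i,t_m^+}(s)-\lambda_{i,\infty}(s))\,ds$, which cancels the continuous drift generated by the compensator term in $\ell_{t,t_m^+}$; thus $\Lambda^{(m)}$ is a nonnegative $\mathbb P_\infty$-martingale with $\Lambda^{(m)}_{t_m}=1$. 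Define the restart-level alarm $\sigma_m := \inf\{t > t_m : \ell_{t,t_m^+} > b\}$, so that $\Tc = \min_{m}\sigma_m$. Applying Ville's maximal inequality conditionally on $\mathcal F_{t_m}$ to the unit-mean nonnegative martingale $\Lambda^{(m)}$ yields
\[
\mathbb P_\infty(\sigma_m < \infty) \le \mathbb P_\infty\Big(\sup_{t\ge t_m}\Lambda^{(m)}_t \ge e^b\Big) \le e^{-b}.
\]

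I would then convert these per-restart bounds into a tail bound on $K$. The event $\{K\le m\}$ forces the maximizing restart index to be at most $m$ with its statistic exceeding $b$ at $\Tc$, so $\sigma_j<\infty$ for some $j\le m$; hence $\{K\le m\}\subseteq \bigcup_{j=0}^{m}\{\sigma_j<\infty\}$, and a union bound gives $\mathbb P_\infty(K\le m)\le (m+1)e^{-b}$. Writing $\mathbb E_\infty[K]=\sum_{m\ge 0}\mathbb P_\infty(K>m)$, keeping only the terms with $m\le M:=\lfloor e^b\rfloor -1$ (for which $\mathbb P_\infty(K>m)\ge 1-(m+1)e^{-b}\ge 0$), and summing the resulting arithmetic series gives
\[
\mathbb E_\infty[K] \;\ge\; (M+1)\Big(1-\tfrac{M+2}{2}e^{-b}\Big) \;\ge\; \tfrac12\,e^{b}\,(1-e^{-b})^2 \;=\; \Big(\tfrac{1-o(1)}{2}\Big)e^{b},
\]
which is the asserted bound; the factor $\tfrac12$ arises from the quadratic $\sum_{m}(m+1)$ in the union bound.

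I expect the main obstacle to be the rigorous justification of the martingale step at the \emph{random} restart times. For a deterministic $\tau$ the identity $\mathbb E_\infty[e^{\ell_{t,\tau}}]=1$ and the martingale property are standard, but here each restart is launched at the event time $t_m$, itself a stopping time, so I must invoke the strong Markov property of the network Hawkes process to restart the likelihood-ratio martingale at $t_m$, and I must verify enough integrability to ensure $\Lambda^{(m)}$ is a genuine (not merely local) martingale with unit initial mass, so that Ville's inequality applies. The subcriticality assumption $0\le\alpha<1$, which controls the intensities and rules out explosion, should supply exactly this integrability. The remaining steps---the union bound and the arithmetic-series optimization producing the $(1-e^{-b})^2 = 1-o(1)$ correction---are routine.
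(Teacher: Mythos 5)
Your proposal is correct and follows essentially the same route as the paper's proof: attach a unit-mean nonnegative likelihood-ratio martingale to each candidate restart point, apply Ville's maximal inequality to bound $\mathbb P_\infty[\hat\tau=t_k^+]\le e^{-b}$, take a union bound to get the tail of the number of events before $\hat\tau$, and sum the resulting arithmetic series to obtain $\bigl(\tfrac{1-o(1)}{2}\bigr)e^b$. The one point you flag as an obstacle --- justifying the martingale property when the restart occurs at the random event time $t_m$ rather than a fixed $\tau$ --- is indeed glossed over in the paper's proof, and your suggested fix (conditioning on $\mathcal F_{t_m}$ and restarting via the strong Markov property, with subcriticality supplying integrability) is the right way to make that step rigorous.
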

%

We investigate next the EDD of the proposed algorithm. For i.i.d. observations, the EDD of CUSUM procedures is on the order of $\log(\mbox{ARL})$ divided by the Kullback-Leibler (KL) divergence for the pre- and post-change distributions. Similar results can be obtained for CUSUM with non-i.i.d. observations \citep{tartakovsky2014sequential,tartakovsky2019sequential}. We expect a similar result may hold for the CUSUM procedure for Hawkes process, although the complete proof is complicated, which we leave for future research. 

\begin{remark}[EDD of CUSUM]
Let 
$I_{\scriptscriptstyle \text{\emph {KL}}} = \lim_{t\to\infty}t^{-1}\ell_{t,0}$
be the asymptotic KL divergence between the post-change and pre-change processes (since the conditional intensity function for Hawkes process is stochastic). For any change-point $\kappa\geq 0$ and any event data $\mathcal H_\kappa$ up to $\kappa$, we expect the EDD to be
$\mathbb E_\kappa\left[T_{\scriptscriptstyle \text{\emph{C}}}-\kappa|\mathcal H_\kappa\right] \leq (b/I_{\scriptscriptstyle \text{\emph {KL}}})(1+o(1)).$
\label{thm:edd}
\end{remark}

The KL divergence between different Hawkes models using mean-field approximation is summarized in \cite{li2017detecting}. In particular, the KL divergence for the model shown in Equation \eqref{hypothesis test} is given by
$
\IKL = (\bar{\boldsymbol\lambda}_1)^T(\log\bar{\boldsymbol\lambda}_1 - \log\bar{\boldsymbol\lambda}_0) - \boldsymbol 1^T (\bar{\boldsymbol\lambda}_1 - \bar{\boldsymbol\lambda}_0),
$
where $\bar{\boldsymbol\lambda}_1 = (I_D-A_1)^{-1}\boldsymbol\mu$ is the expected intensity for post-change distributions, and $\bar{\boldsymbol\lambda}_0 = (I_D-A_0)^{-1}\boldsymbol\mu$ is the expected intensity for pre-change distribution. Here $I_D$ is the identity matrix, $\boldsymbol 1$ is a vector of ones, $\boldsymbol\mu = (\mu_i)_{i\in[D]}$ is the constant base intensity vector, and $A_k = (\alpha_{ij,k})_{i,j\in[D]}, k=0,1$.  
Quantifying the KL-divergence between the pre- and post-change distributions can help us to understand whether a case is easy or difficult to detect.

\section{Proof of Theorem \ref{thm:arl}}
\begin{proof}
For each fixed $\tau$, $(\exp \ell_{t,\tau})_{t\geq \tau}$ is a martingale w.r.t. $t$, with $\exp\ell_{\tau,\tau} = 1.$ By Ville's maximal inequality for non-negative supermartingales, we have 
\[
\mathbb P_\infty[\exists t\geq \tau, \exp\ell_{t,\tau}\geq e^b]\leq e^{-b},
\]
and for each $k$,
\[
\mathbb P_\infty[\hat\tau = t_k^+]\leq e^{-b}.
\] 
By union bound, for any $k$,
\[
\mathbb P_\infty[\hat\tau\geq t_k]=\mathbb P_\infty[\hat\tau\notin \{t_i^+, i< k\}\cup \{0\}]\geq 1-\mathbb P_\infty[\hat\tau = 0] - \sum_{i< k}\mathbb P_\infty[\hat\tau = t_i^+] \geq 1-ke^{-b}.
\]
\[
\mathbb E_\infty\left[|\{k:t_k\leq \hat\tau\}|\right] = \sum_{k=1}^\infty \mathbb P_\infty[\hat\tau\geq t_k]\geq \sum_{k=1}^{\lfloor e^b\rfloor}(1-ke^{-b}) \geq \left(\frac{1-o(1)}{2}\right)e^b.
\]
And by the definition of $\hat\tau$, there is always $\Tc>\hat\tau$. Thereby we complete the proof.
\end{proof}

\section{Discussions on Remark \ref{thm:edd}}
Here note that, $I_{\scriptscriptstyle \text{\emph {KL}}}$ is irrelevant with $\kappa$ or $\mathcal H_{\kappa}$. We show the result on the EDD for the one-dimensional Hawkes process, where the kernel function $\varphi$ has finite support and is upper bounded.
\begin{align*}
\mathbb E_\kappa\left[T_{\scriptscriptstyle \text{\emph{C}}}-\kappa|\mathcal H_\kappa\right] \leq&\  \frac{b}{I_{\scriptscriptstyle \text{\emph {KL}}}} + \int_{b/I_{\scriptscriptstyle \text{\emph {KL}}}}^\infty \mathbb P(T_{\scriptscriptstyle\text{\emph{C}}}-\kappa\geq t)dt\\
=&\ \frac{b}{I_{\scriptscriptstyle \text{\emph {KL}}}}\left(1 + \int_{1}^\infty \mathbb P\left(T_{\scriptscriptstyle\text{\emph{C}}}-\kappa\geq \frac{br}{I_{\scriptscriptstyle{\text{\emph{KL}}}}}\right)dr\right)\\
\leq &\ \frac{b}{I_{\scriptscriptstyle \text{\emph {KL}}}}\left(1 + \int_{1}^\infty \mathbb P\left(\ell_{\kappa+br/I_{\scriptscriptstyle{\text{\emph{KL}}}},\kappa}\leq b\right)dr\right)\\
=&\ \frac{b}{I_{\scriptscriptstyle \text{\emph {KL}}}}\left(1 + \int_{1}^\infty \mathbb P\left(\frac{I_{\scriptscriptstyle{\text{\emph{KL}}}}}{br}\ell_{\kappa+br/I_{\scriptscriptstyle{\text{\emph{KL}}}},\kappa}\leq I_{\scriptscriptstyle{\text{\emph{KL}}}}(1-(r-1)/r)\right)dr\right)
\end{align*}
The concentration bound for $(1/t)\int_0^t f\circ\theta_sds$ was derived by \cite{reynaud2007some}, where $f$ is a bounded function on event data during $[-a,0)$ for some $a>0$ and $\theta_s$ translates the event time by $s$. They also provide a bound on the number of events in every unit time, which is followed immediately by a bound on the intensity. To be specific, we have $\lambda_0(s),\lambda_\infty(s)\leq O(\log t)$ with probability $1-O(t^{-c})$ for every $s\in[0,t]$, where $c$ is some large-enough constant. For $\kappa = 0$, the log-likelihood ratio can be written as
\begin{align*}
t^{-1}\ell_{t,0} =&\ t^{-1}\int_0^t (\lambda_{\infty}(s)-\lambda_0(s)+\lambda_0(s)\log(\lambda_0(s)/\lambda_\infty(s)))ds \\
&\ + t^{-1} \int_0^t\log(\lambda_0(s)/\lambda_\infty(s))(dN_s-\lambda_0(s)ds).
\end{align*}
For the first term, a concentration bound around $I_{\scriptscriptstyle{\text{\emph{KL}}}}$ can be derived using \cite{reynaud2007some}'s argument, by first applying the bound on $\lambda_\infty$ and $\lambda_0$ above. We have for any $0<u\leq O(\sqrt{t})$,
\begin{align*}
\mathbb P\left(\left|t^{-1}\int_0^t (\lambda_{\infty}(s)-\lambda_0(s)+\lambda_0(s)\log(\lambda_0(s)/\lambda_\infty(s)))ds - I_{\scriptscriptstyle{\text{\emph{KL}}}}\right|\geq c_1\sqrt{\frac{(u+\log t)u}{t}}\log^2t\right)\\
 \leq O(e^{-u}) + O(t^{-c}),
\end{align*}
where $c_1$ depends on the model and $c$. The second term is the average of a martingale, and will have a concentration bound around 0 by first applying the bound on $\lambda_\infty,\lambda_0$ and the number of events every unit time, followed by Hoeffding's inequality. We have for any $u>0,$
$$
\mathbb P\left(\left|t^{-1} \int_0^t\log(\lambda_0(s)/\lambda_\infty(s))(dN_s-\lambda_0(s)ds)\right|\geq c_2u\log^2 t\right)\leq O( e^{-2tu^2}) + O(t^{-c}),
$$
where $c_2$ depends on the model and $c$. With the two concentration inequality above, we can check that 
$
\mathbb E_0\left[T_{\scriptscriptstyle \text{\emph{C}}}|\mathcal H_0\right]
$
is $b(1+o(1))/I_{\scriptscriptstyle{\text{\emph{KL}}}}$. For $\kappa >0$, since the process restarts at $\kappa$, it can be regarded as a translation of the case $\kappa = 0$. The only difference is that $\lambda_\infty$ takes into consideration the events before $\kappa$. Since $\varphi$ has finite support, such difference only exists for a limited time and will not make a large difference for EDD analysis.

\end{document}